\newtheorem{theorem}{Theorem}
\newtheorem{lemma}{Lemma}
\newtheorem{assumption}{Assumption}
\newtheorem{remark}{Remark}
\newcommand{\Py}{{\mathbb P}}
\newcommand{\E}{{\mathbb E}}
\newcommand{\U}{{\mathcal U}}
\newcommand{\ltlf}{\textsc{LTL}_f}
\newcommand{\supp}{\mathrm{supp}}
\newcommand{\loc}[1]{{#1}^{\mathrm{loc}}}
\newcommand{\loci}[2]{{#1}^{\mathrm{loc,#2}}}
\newcommand\inlineeqno{\stepcounter{equation}\ (\theequation)}
\title{
Optimal Control of Logically Constrained \\ Partially Observable and Multi-Agent \\ Markov Decision Processes
}
\author{Krishna C. Kalagarla$^1$, Dhruva Kartik$^2$, Dongming Shen$^2$, Rahul Jain$^3$, Ashutosh Nayyar$^3$ and Pierluigi Nuzzo$^3$
\thanks{$^{1}$K. Kalagarla was with the Ming Hsieh Department of Electrical and Computer Engineering, University of Southern California, Los Angeles, USA. He is now with the Electrical and Computer Engineering Department, University of New Mexico, Albuquerque, USA,
{\tt\small kalagarl@unm.edu}.}
\thanks{$^{2}$D. Kartik and D. Shen were with the Ming Hsieh Department of Electrical and Computer Engineering, University of Southern California, Los Angeles, USA, {\tt \small (mokhasun,alvinshe)@usc.edu.}}

\thanks{$^{3}$R. Jain, A. Nayyar and P. Nuzzo are with the Ming Hsieh Department of Electrical and Computer Engineering, University of Southern California, Los Angeles, USA, {\tt\small \{rahul.jain,ashutosh.nayyar, nuzzo\}@usc.edu}.}}
\begin{document}
\maketitle
\thispagestyle{plain}
\pagestyle{plain}
\begin{abstract} Autonomous systems often have logical constraints arising, for example, from safety, operational, or regulatory requirements. Such constraints can be expressed using temporal logic specifications. The system state is often partially observable. Moreover, it could encompass a team of multiple agents with a common objective but disparate information structures and constraints. In this paper, we first introduce an optimal control theory for partially observable Markov decision processes (POMDPs) with finite linear temporal logic ($\boldsymbol{\ltlf}$) constraints. We provide a structured methodology for synthesizing policies that maximize a cumulative reward while ensuring that the probability of satisfying a temporal logic constraint is sufficiently high.  Our approach comes with guarantees on approximate reward optimality and constraint satisfaction. We then build on this approach to design an optimal control framework for logically constrained multi-agent settings with information asymmetry. We illustrate the effectiveness of our approach by implementing it on several case studies.
\end{abstract}

\section{Introduction}\label{sec:intro}

Autonomous systems are rapidly being deployed in many safety-critical applications like robotics, transportation, and advanced manufacturing. Markov decision processes (MDPs)~\cite{Puterman:1994:MDP:528623} can model a wide range of sequential decision-making scenarios in these dynamically evolving environments. Traditionally, a reward structure is defined over the MDP state-action space, and is then maximized to achieve a desired objective. Formulating an appropriate reward function is critical, as an incorrect formulation can easily lead to unsafe and unforeseen behaviors. Designing reward functions for complex specifications can be exceedingly difficult and may not always be possible.

Increasing interest has been directed over the past decade toward leveraging tools from formal methods and temporal logic~\cite{baier2008principles} to alleviate this difficulty. These tools allow unambiguously specifying, solving, and validating complex control and planning problems. Temporal logic formalisms are capable of capturing a wide range of task specifications, including surveillance, reachability, safety, and sequentiality. However, while certain objectives like safety or reachability  are well expressed by temporal logic constraints, others, e.g., pertaining to system performance or cost, are often better framed as ``soft'' rewards to be maximized. In this paper, we focus on such composite tasks. 

Consider, for example, an autonomous robot tasked with navigating through a warehouse with hazardous areas to perform inspections or repairs. We can express complex requirements for this robot such as ``Always avoid hazardous areas," ``Eventually perform inspections," or ``Always return to a charging station when the battery is low," unambiguously via temporal logic. Model checking methods can be used to verify if a given controller satisfies these requirements~\cite{baier2008principles}. We can also use algorithmic methods to synthesize a controller for the robot such that the resulting controller satisfies the requirements by construction~\cite{baier2008principles}. This is the approach we adopt in the paper. Further, we wish to account for additional considerations, such as fuel efficiency or smoothness of motion, expressed by reward functions to be maximized.

While full state observability is assumed in environments modeled by MDPs, this assumption excludes many real-life scenarios where the state is only partially observed. These scenarios can instead be captured by partially observable Markov decision processes (POMDPs). For example, the warehouse robot only has a partial observation of the state since the sensor  observations only provide a noisy estimate of it.

In this paper, we expand the traditional POMDP framework to incorporate temporal logic specifications. Specifically, we aim to synthesize policies such that the agent's cumulative reward is maximized while the probability of satisfying a given temporal logic specification is above a desired threshold. 
Due to the inherent partial observability, planning methods for MDPs with temporal logic specifications~\cite{kalagarla2021optimal,sickert2016limit,hahn2019omega} are not directly applicable to our problem. Recasting our problem as an MDP would necessitate constructing beliefs over an augmented state space, which grows exponentially with the time horizon. This results in an extremely large belief space, making the synthesis methods developed for MDPs intractable in the context of POMDPs. Consequently, we utilize scalable algorithms specifically designed for POMDPs to solve our problem effectively.

We consider finite linear temporal logic ($\ltlf$)~\cite{de2013linear}, a temporal extension of propositional logic, to express complex tasks. $\ltlf$ is a variant of linear temporal logic (LTL)~\cite{baier2008principles}, which is interpreted over finite length strings rather than infinite length strings. 
 We can express the requirements of our warehouse robot example in $\ltlf$, respectively, as follows: $Always~\neg \text{hazard.nearby}, \enspace Eventually \enspace \text{inspection.done}, \\Always \enspace   (\text{low.battery} \to Eventually \enspace \text{charging.station}) $. We need to simultaneously consider the cumulative POMDP reward and the temporal logic satisfaction. For a given $\ltlf$ specification, we construct a deterministic finite automaton (DFA) that accepts an agent's trajectory if and only if it satisfies the specification~\cite{zhu2017symbolic}. By augmenting the system state with the DFA's internal state, we can monitor both environmental and task status. This approach enables us to frame our planning problem as a standard reward-constrained POMDP problem. In the context of our motivating example, we combine the POMDP expressing the robot dynamics in the warehouse and the  DFA expressing our  specifications into a constrained POMDP problem containing details of both the POMDP and the DFA (thus the $\ltlf$ specification). 
 
We then tackle the challenge of constrained POMDP problems by proposing an iterative primal-dual scheme which solves a sequence of unconstrained POMDP problems using any off-the-shelf unconstrained POMDP solver~\cite{kurniawati2008sarsop,silver2010monte}. This approach effectively leverages the established techniques of unconstrained POMDP planning. The iterative scheme also incorporates regret bounds from no-regret online learning \cite{hazan2016introduction}, providing guarantees on the near-optimality of the returned policy.

Finally, we extend our framework to a multi-agent setting with information asymmetry. We use the previously described approach for composition with the DFA to obtain a constrained multi-agent planning problem. In this case, our approach requires solving a sequence of unconstrained multi-agent problems. Under some mild assumptions, these unconstrained multi-agent problems can be viewed as single-agent POMDP problems via the \emph{common information approach}~\cite{nayyar2013decentralized}. Owing to the theoretical guarantees of the common information approach and our algorithm, the returned policies will still provide the desired guarantees on optimality and temporal logic satisfaction.

Our contributions can be summarized as follows: 
   (i) We formulate a novel optimal control problem in terms of cumulative reward maximization in POMDPs under expressive $\ltlf$ constraints.
   (ii) We design an iterative planning scheme which can leverage any off-the-shelf unconstrained POMDP solver to solve the problem. Differently from existing work on constrained POMDPs, our scheme uses a no-regret online learning approach to provide theoretical guarantees on the near-optimality of the returned policy. 
   (iii) We extend the single-agent framework to a multi-agent setting with information asymmetry, where agents have a common reward objective but can have individual or even joint temporal logic requirements. Their different information structure makes the problem rather difficult. 
   (iv) We conduct experimental studies on a suite of single and multi-agent environments to validate the algorithms developed and explore their effectiveness. To the best of our knowledge, this is the first effort on optimal control of logically constrained partially observed and multi-agent MDP models.

We presented some preliminary results on optimal control of POMDPs with $\ltlf$ specifications in a previous paper~\cite{kalagarla2022optimal}. In this paper, we expand on our previous results and show that our methodology is broader in scope, in that it encompasses other problem formulations and solution strategies. 
% and considerations. 
In particular, we extend our planning algorithm from a single-agent setting 
% reward optimal control under $\ltlf$ constraints 
to a \emph{multi-agent setting with information asymmetry}. Moreover, while we previously assumed access to an \emph{exact} POMDP solver and an exact estimator of the expected total reward, we show that our method can also operate with an approximate POMDP solver and approximate estimates of the expected total reward. We then show that our method also supports cumulative reward constraints along with $\ltlf$ requirements. Finally, we include all the proofs of our theorems.

\section{Related Work}

Synthesizing policies for MDPs such that they maximize the probability of satisfying a temporal logic specification has been extensively studied in the context of known \cite{ding2011ltl,sickert2016limit} and unknown \cite{hahn2019omega,bozkurt2019control} transition probabilities. 

Several approaches have also been proposed for maximizing the  temporal logic satisfaction probability in POMDPs, albeit without considering an additional reward objective. Some of these methods include building a finite-state abstraction via approximate stochastic simulation over the belief space~\cite{haesaert2018temporal}, grid-based discretization of the POMDP belief space~\cite{norman2015verification},  
 and restricting the space of policies to finite state controllers~\cite{ahmadi2020stochastic, sharan2014finite}. Similarly to our work, leveraging well-studied unconstrained POMDP planners~\cite{liuleveraging,bouton2020point} has also been proposed in this context, while  deep learning approaches like those using recurrent neural networks~\cite{carr2020verifiable,carr2019counterexample} have been lately explored.

Our approach further differs from a few other existing methods for solving constrained POMDPs. 
One of these methods \cite{poupart2015approximate} addresses a constrained POMDP by modeling it as a constrained belief MDP and employing an approximate linear program that operates on a selectively refined subset of reachable beliefs. However, the expanding size of the subset of reachable beliefs and the resulting increase in the complexity of the linear programs eventually render this approach computationally intractable.
Another method addresses the scalability issues of the previous one using a primal-dual approach based on Monte Carlo Tree Search (MCTS) \cite{lee2018monte}. We solve the constrained POMDP problem using a similar method.  A key difference is that, instead of using an MCTS approach, we use an approximate unconstrained POMDP solver, SARSOP \cite{kurniawati2008sarsop},  which returns policies along with bounds on their optimality gaps.  
Column generation algorithms \cite{walraven2018column} also use a primal-dual approach, but with a different dual parameter update procedure. While only convergence to optimality is discussed for these algorithms, our method, on the other hand, gives a precise relationship between the approximation error and the number of iterations.

A few methods have also appeared which address temporal logic satisfaction in a multi-agent setting, albeit leveraging different formulations and solution strategies. Some examples include the synthesis of a joint policy maximizing the temporal logic satisfaction probability for multiple agents with shared state space \cite{hammond2021multi} and the decomposition of temporal logic specifications for scalable planning for a large number of agents under communication constraints \cite{wang2022decentralized,eappen2022distspectrl}. Some efforts have also leveraged the robustness semantics of temporal logics to guide reward shaping for multi-agent planning~\cite{zhang2022distributed,sun2020automata}. To the best of our knowledge, our method is the first to consider a reward objective and a temporal logic constraint in the multi-agent stochastic setting with information asymmetry.

\section{Preliminaries} \label{sec:prelim}

We denote the POMDP by $\mathscr{M}$ and the $\ltlf$ specification by $\varphi$. The DFA equivalent to the $\ltlf$ specification is symbolized by $\mathscr{A}$ and the product POMDP constructed with $\mathscr{M}$ and $\mathscr{A}$ is designated as $\mathscr{M} ^{\times}$. $\Py_{\varphi}^{\mathscr{M}}(\mu)$ denotes the probability that a run of $\mathscr{M}$ satisfies $\varphi$ under policy $\mu$. The indicator function $\mathds{1}_{S}(s)$ evaluates to $1$ when $s\in S$ and 0 otherwise. %For a singleton set $\{s_0\}$, we will denote $\mathds{1}_{\{s_0\}}(s)$ by $\mathds{1}_{s_0}(s)$ for simplicity. 
The probability simplex over the set $S$ is denoted by $\Delta_{S}$.  Finally, for a string $s$, $|s|$ denotes its length.

\subsection{Labeled POMDPs} \label{sec:prelim_a}
\subsubsection{Model} 
A Labeled Partially Observable Markov Decision Process is defined as a tuple $ \mathscr{M} = ({\mathcal{S}},{\U},P,\varpi,{O},Z,A P,L,r^o,r^c,T)$, where ${\mathcal{S}}$ is a finite state space, ${\U}$ is a finite action space, $P_t(s,u;s')$ is the probability of transitioning from state $s$ to state $s'$ on taking action $u$ at time $t$, $\varpi \in \Delta_{\mathcal{S}}$ is the initial state distribution, ${O}$ is a finite observation space, $Z_t(s;o)$ is the probability of seeing observation $o$ in state $s$ at time $t$, $AP$ is a set of atomic propositions, used to indicate the truth value of a predicate (property) of the state, e.g., the presence of an obstacle or goal state. $L: {\mathcal{S}} \to 2^{AP}$ is a labeling function which indicates the set of atomic propositions which are true in each state, e.g., $L(s) = \{a\}$ indicates that only the atomic proposition $a$ is true in state $s$. $r^{o}_{t}(s,u)$ and $r^{c}_{t}(s,u)$ are the objective and constraint rewards obtained on taking action $u$ in state $s$ at time $t$. The state, action, and observation at time $t$ are denoted by $\mathcal{S}_t$, $\U_{t}$, and $O_t$, respectively. At any given time $t$, the information available to the agent is the collection of all the observations $O_{0:t}$ and all the past actions $\U_{0:t-1}$. We denote this information with $I_t = \{O_{0:t},\U_{0:t-1}\}$. We say that the system is time-invariant when the reward functions $r^o_t, r^c_t$ and the transition and observation probability functions $P_t$ and $Z_t$ do not depend on time $t$. The POMDP runs for a random time horizon $T$. This random time may be determined exogenously (independently) of the POMDP or it may be a stopping time with respect to the information process $\{I_t: t \geq 0\}$.

\subsubsection{Pure and Mixed Policies}

A \emph{control law} $\pi_t$ maps the information $I_t$ to an action in the action space $\U$. A \textit{policy} $\pi := (\pi_{0},\pi_1,\ldots)$ is then the sequence of laws over the entire horizon. We refer to such deterministic policies as pure policies and denote the set of all pure policies with $\mathcal{P}$. 

A mixed policy $\mu$ is a distribution on a finite collection of pure policies. Under a mixed policy $\mu$, the agent randomly selects a pure policy $\pi \in \mathcal{P}$ with probability $\mu(\pi)$ before the POMDP begins. The agent uses this randomly selected policy to select its actions during the course of the process. More formally, $\mu:\mathcal{P}\to [0,1]$ is a mapping. The support of the mixture $\mu$ is defined as $ \supp(\mu):=\{\pi \in \mathcal{P} :\mu(\pi)\neq 0 \} \quad \inlineeqno$

If the support size of a mixed policy is $1$, then it is a pure policy.
The set $\mathcal{M}_p$ of all mixed mappings is given by
\begin{align}
    \mathcal{M}_p:=\left\{\mu:|\supp(\mu)|<\infty ,\sum_{\pi \in \supp(\mu)}\mu(\pi)=1\right\}.\label{mixedpol}
\end{align}
Clearly, the set $\mathcal{M}_p$ of mixed policies is convex. 

A \emph{run} ${\xi}$ of the POMDP is the sequence of  states and actions $((\mathcal{S}_0, \U_0), (\mathcal{S}_1,\U_1), \cdots ,(\mathcal{S}_{T},\U_{T}))$ up to horizon $T$. The total expected objective reward for a policy $\mu$ is given by
\begin{align}
    \mathcal{R}^{\mathscr{M}}_{o}(\mu) &= \E_\mu^\mathscr{M}\left[\sum_{t=0}^T r^{o}_{t}(\mathcal{S}_t,\U_t)\right]\\
    &=\sum_{\pi \in \supp(\mu)}\left[\mu(\pi)\E_\pi^\mathscr{M}\left[\sum_{t=0}^T r^{o}_{t}(\mathcal{S}_t,\U_t)\right]\right].
\end{align}
The total expected constraint reward  $\mathcal{R}^{\mathscr{M}}_{c}(\mu)$ is similarly defined with respect to $r^{c}$. $\mathcal{R}^{\mathscr{M}}_{o}(\mu)$ and $\mathcal{R}^{\mathscr{M}}_{c}(\mu)$  are linear functions in $\mu$.

\begin{assumption}\label{finiteassumpstrong}
The POMDP $\mathscr{M}$ is such that for every pure policy $\pi$, the expected value of the random horizon $T$ is bounded by a finite constant and the total expected constraint reward is also bounded above by a finite constant, i.e.,
\begin{align}
    &\E_\pi^{\mathscr{M}}[T] < T_{\textsc{max}} < \infty, \\
    &0 \le \mathcal{R}^{\mathscr{M}}_{c}(\pi)\le \mathbf{R}^{max}_{c} ~~\forall \ \pi.
\end{align}
The expectation in $\E_\pi^{\mathscr{M}}[T]$ is over the random state, action, and observation trajectory of POMDP $\mathscr{M}$ under the pure policy $\pi$.
\end{assumption}

Assumption \ref{finiteassumpstrong} ensures that  $T$ is finite almost surely, i.e., $\Py_\mu^\mathscr{M}[T<\infty]=1$ and the total expected constraint reward satisfies $\mathcal{R}^{\mathscr{M}}_{c}(\mu) \leq \mathbf{R}^{max}_{c} $ for every policy $\mu$.

\subsection{Finite Linear Temporal Logic}

We use $\ltlf$~\cite{de2013linear}, a variant of linear temporal logic (LTL)~\cite{baier2008principles} interpreted over finite strings,  to express complex task specifications. Given a set $AP$ of atomic propositions, i.e., Boolean variables that have a unique truth value ($\mathsf{true}$ or $\mathsf{false}$) for a given system state, $\ltlf$ formulae are constructed inductively as  follows: 
\begin{equation*}
    \varphi := \mathsf{ true } \ | \ a \ | \ \neg  \varphi \ | \ \varphi_1 \wedge \varphi_2 \ | \ \textbf{X} \varphi \ | \ \varphi_1 \textbf{U} \varphi_2, 
\end{equation*}
where $a \in AP$, $\varphi$, $\varphi_1$, and $\varphi_2$ are LTL formulae, $\wedge$ and $\neg$ are the logic conjunction and negation,  and $\textbf{U}$ and $\textbf{X}$ are the \emph{until} and \emph{next} temporal operators. Additional temporal operators such as \emph{eventually} ($\textbf{F}$) and \emph{always} ($\textbf{G}$)  are derived as $\textbf{F} \varphi := \mathsf{ true } \textbf{U} \varphi$ and $\textbf{G} \varphi := \neg \textbf{F} \neg \varphi$. For example, $ \varphi = \textbf{F}a\wedge(\textbf{G}\neg b)$ expresses the specification that a state where atomic proposition $a$ holds true has to be 
\emph{eventually} reached by the end of the trajectory and states where atomic proposition $b$ holds true have to be \emph{always} avoided.

$\ltlf$ formulae are interpreted over finite-length words $w = w_0w_1 \cdots w_{last} \in {(2^{AP})}^{*}$, where each letter $w_i$ is a set of atomic propositions and $last = |w| - 1$ is the index of the last letter of the word $w$. 
Given a finite word $w$ and $\ltlf$ formula $\varphi$, we inductively define when $\varphi$ is $\mathsf{true}$ for $w$ at step $i$, $0 \leq i < |w|$, written $(w,i) \models \varphi$. Informally, $a$ is $\mathsf{true}$ for $(w,i)$ iff $a \in w_i$; $\textbf{X} \varphi$ is $\mathsf{true}$ for $(w,i)$ iff 
$\varphi$ is $\mathsf{true}$ for $(w,i+1)$; $ \varphi_1 \textbf{U}  \varphi_2$ is $\mathsf{true}$ for $(w,i)$ iff there exits $k \geq i $ such that $\varphi_2$ is $\mathsf{true}$ for $(w,k)$ and $\varphi_1$ is $\mathsf{true}$ for all $j, \  i \leq  j  < k$; $\textbf{G} \varphi$ is $\mathsf{true}$ for $(w,i)$ iff $\varphi$ is $\mathsf{true}$ for all $(w,j), j\geq i$; $\textbf{F} \varphi$ is $\mathsf{true}$ for $(w,i)$ iff $\varphi$ is $\mathsf{true}$ for some $(w,j), j\geq i$ (where `iff' is shorthand for `if and only if'). A formula $\varphi$ is $\mathsf{true}$ in $w$, denoted by $w \models \varphi $, iff $(w,0) \models \varphi$.

Given a POMDP $ \mathscr{M}$ and an $\ltlf$ formula $\varphi$, a \emph{run} $\xi = ((s_0,u_0),(s_1,u_1),\cdots, (s_T,u_T))$ of the POMDP 
under policy $\mu$ is said to satisfy $\varphi$ if the
word $w = L(s_0)L(s_1)\cdots \in {(2^{AP})}^{T+1}$ generated by the \emph{run} satisfies $\varphi$. The probability that a run of $\mathscr{M}$ satisfies $\varphi$ under policy $\mu$ is denoted by $\Py_{\varphi}^{\mathscr{M}}(\mu)$. We refer to Section \ref{exp} for various examples of $\ltlf$ specifications, especially those which cannot be easily expressed by standard reward functions.

\subsection{Deterministic Finite Automaton (DFA)}

The language defined by an $\ltlf$ formula, i.e., the set of words satisfying the formula, can be captured by a Deterministic Finite Automaton (DFA)~\cite{zhu2017symbolic}. A DFA is a finite state machine  that accepts or rejects a given string of symbols, by running through a state sequence uniquely determined by the string. It consists of a finite set of states, a set of input symbols, a transition function, a start state, and a set of accepting states~\cite{sipser1996introduction}. We denote such a DFA by a tuple $\mathscr{A} = (Q, \Sigma, q_0, \delta, F)$, where $Q$ is a finite set of states, $\Sigma$ is a finite alphabet, $q_0$ is the initial state, $\delta :
Q \times \Sigma   \to  {Q}$  is a transition function, and $F \subseteq Q$ is the set of accepting states. A \emph{run} $\xi_{\mathscr{A}}$ of $\mathscr{A}$ over a finite word $w = w_0\cdots w_n$, with $w_i \in \Sigma$, is a  sequence of states, $q_0q_1\cdots q_{n+1} \in Q^{n+1}$ such that $q_{i+1} = \delta(q_i,w_i), i = 0,\cdots,n$. A \emph{run} $\xi_{\mathscr{A}}$ is \emph{accepting} if it ends in an accepting state, i.e., 
$q_{n+1} \in F$. A word $w \in \Sigma^{*}$ is accepted by $\mathcal{A}$ if and only if there exists an accepting \emph{run} $\xi_{\mathscr{A}}$ of $\mathcal{A}$ on $w$.
Finally, we say that an $\ltlf$ formula is equivalent to a DFA $\mathscr{A}$ if and only if the language defined by the formula is the language (i.e., the set of words) accepted by $\mathscr{A}$. For any $\ltlf$ formula $\varphi$ over $AP$, we can construct an equivalent DFA with input alphabet $2^{AP}$~\cite{zhu2017symbolic}.

\section{Problem Formulation and Solution Strategy}\label{sec:prob_for}

Given a labeled POMDP $\mathscr{M}$ and an $\ltlf$ specification $\varphi$, our objective is to design a policy $\mu$ that maximizes the total expected objective reward $\mathcal{R}^{\mathscr{M}}_{o}(\mu)$ while ensuring that the probability $\Py_{\varphi}^{\mathscr{M}}(\mu)$ of satisfying the specification $\varphi$ is at least $1-\delta$ and the total expected constraint reward $\mathcal{R}^{\mathscr{M}}_{c}(\mu)$ exceeds a threshold $\rho$. $\mathcal{R}^{\mathscr{M}}_{o}(\mu), \Py_{\varphi}^{\mathscr{M}}(\mu)$, and $\mathcal{R}^{\mathscr{M}}_{c}(\mu)$ are all used to express different requirements which we wish to satisfy. More formally, we would like to solve the following constrained optimization problem
\begin{equation}\tag{P1} \label{probform}
    \begin{aligned}
   \textbf{LTL$_f$-POMDP:}~~~\underset{\mu  \in \mathcal{M}_p}{\sup} \quad & {\mathcal R}^{\mathscr{M}}_{o}(\mu)\\ 
   \mathrm{s.t.} \quad     & \mathcal{R}^{\mathscr{M}}_{c}(\mu) \geq \rho,\\
& \Py_{\varphi}^{\mathscr{M}}(\mu) \geq  1-\delta.
\end{aligned}
\end{equation}
If \eqref{probform} is feasible, then we denote its optimal value with $\mathcal{R}^{\mathscr{M}}_{*}$. If \eqref{probform} is infeasible, then $\mathcal{R}^{\mathscr{M}}_{*} = -\infty$.

\begin{remark}
The results and experiments in this work are in the setting of  POMDPs and multi-agent MDPs with finite state, action, and observation spaces.
\end{remark}

\subsection{Constrained Product POMDP}\label{conspomdp}

In \eqref{probform}, we require a policy $\mu$ to maximize the objective reward and satisfy a reward constraint, both of which are expressed via the given POMDP $\mathscr{M}$. We also need the same policy $\mu$ to satisfy the $\ltlf$ constraint which is expressed via a DFA $\mathscr{A}$ (equivalent to the $\ltlf$ specification $\varphi$) instead of the POMDP $\mathscr{M}$. We thus construct a constrained product POMDP $\mathscr{M}^{\times}$ which allows us to address these different requirements via a single unified model. 

Given the labeled POMDP $ \mathscr{M}$ and a DFA $\mathscr{A}$ capturing the $\ltlf$ formula $\varphi$,
we follow a construction previously proposed for MDPs~\cite{kalagarla2022optimal} to obtain a constrained product POMDP $\mathscr{M} ^{\times} = (\mathcal{S}^{\times},  \U^{\times},P^{\times},s_{0}^{\times},r^{o\times},r^{c\times},r^f,\varpi,{O},Z^\times)$  which incorporates the transitions of $ \mathscr{M}$  and $ \mathscr{A}$, the observations and the reward functions of $\mathscr{M}$, and the acceptance set of $\mathscr{A}$.

In the constrained product POMDP $\mathscr{M} ^{\times}$, $\mathcal{S}^{\times} = ({\mathcal{S}} \times Q)$ is the state space, $\U^{\times} = {\U}$ is the action space, and $s_{0}^{\times} = (s_0,q_0)$ is the initial state where the POMDP's initial state $s_0$ is drawn from the distribution $\varpi$ and $q_0$ is the DFA's initial state. For each pair $(s,s')$, $(q,q')$, and $u$, we define the transition function $P^{\times}_t((s,q),u;(s',q'))$ at time $t$ as
\begin{equation} \label{eq:prodtrans}
\begin{aligned}
& \begin{cases} P_t(s,u;s'), &\mbox{if } q' = \delta(q,L(s)), \\
0, & \text{otherwise.}
\end{cases}\\
\end{aligned}    
\end{equation}
The reward functions are defined as 
\begin{align}
    r^{o\times}_{t}((s,q),u) &= {r}^{o}_t(s,u),~ \forall s,q,u \label{prodrewdefo}\\
    r^{c\times}_{t}((s,q),u) &= {r}^{c}_t(s,u),~ \forall s,q,u\label{prodrewdefc}\\
    \label{finalrewdefm}r^f((s,q)) &=
    \begin{cases}
    1,~ &\text{if }q \in F\\
    0,~ &\text{otherwise}.
    \end{cases}
\end{align}
The reward functions $r^{o\times}$ and $r^{c\times}$ depend on the reward functions of the original POMDP $\mathscr{M}$. The reward function $r^f$ is instead informed by the accepting states of the DFA $\mathscr{A}$.
The observation space ${O}$ is the same as in the original POMDP $\mathscr{M}$. The observation probability function $Z^\times((s,q);o)$ is defined to be equal to $Z(s;o)$ for every $s,q,o$. We denote the state of the product POMDP $\mathscr{M}^\times$ at time $t$ with $X_t = (\mathcal{S}_t,Q_t)$ in order to avoid confusion with the state $\mathcal{S}_t$ of the original POMDP $\mathscr{M}$.

At any given time $t$, the information available to the agent is $I_t = \{O_{0:t},\U_{0:t-1}\}$. Control laws and policies in the product POMDP are the same as in the original POMDP $\mathscr{M}$. We define three reward functions in the product POMDP: (i) a reward $\mathcal{R}^{\mathscr{M}^\times}_{o}(\mu)$ associated with the objective reward $r^{o}$, (ii) a reward $\mathcal{R}^{\mathscr{M}^\times}_{c}(\mu)$ associated with the constraint reward $r^{c}$, and (iii) a reward $\mathcal{R}^{f}(\mu)$ associated with reaching an accepting state in the DFA $\mathscr{A}$.
The expected reward $\mathcal{R}^{\mathscr{M}^\times}_{o}(\mu)$ is defined as
\begin{align}
    \mathcal{R}^{\mathscr{M}^\times}_{o}(\mu) = \E_\mu\left[\sum_{t=0}^T r_t^{o\times}(X_t,\U_t)\right].
\end{align}
The expected reward $\mathcal{R}^{\mathscr{M}^\times}_{c}(\mu)$ is similarly defined as:
\begin{align}
    \mathcal{R}^{\mathscr{M}^\times}_{c}(\mu) = \E_\mu\left[\sum_{t=0}^T r_t^{c\times}(X_t,\U_t)\right].
\end{align}
The expected reward $\mathcal{R}^{f}(\mu)$ is defined as
\begin{align}
    \mathcal{R}^{f}(\mu) &= \E_\mu\left[ r^f(X_{T+1})\right].\\
    &=\sum_{\pi \in \supp(\mu)}\mu(\pi)\E_\pi\left[r^f(X_{T+1})\right].\label{eq:linear}
\end{align}
Due to Assumption \ref{finiteassumpstrong}, the stopping time $T$ is finite almost surely and therefore, the reward $\mathcal{R}^{f}(\mu)$ is well-defined. Further, \eqref{eq:linear}, which follows  from the definition of mixed policies, demonstrates that $\mathcal{R}^{f}(\mu)$ is a linear function of $\mu$.

\subsection{Constrained POMDP Formulation}

In the  product POMDP $\mathscr{M}^\times$, we are interested in solving the following constrained optimization problem
\begin{equation} \tag{P2}\label{prodprobform}
    \begin{aligned}
   \textbf{C-POMDP:}~~~\underset{\mu  \in \mathcal{M}_p }{\sup} \quad & {\mathcal R}^{\mathscr{M}^\times}_{o}(\mu)\\ 
      \mathrm{s.t.} \quad  & {\mathcal R}^{\mathscr{M}^\times}_{c}(\mu) \geq \rho,\\
     & \mathcal{R}^{f}(\mu) \geq  1-\delta
\end{aligned}
\end{equation}
whose optimal value is denoted by $\mathcal{R}_{*}^{\mathscr{M}^\times}$.

\begin{theorem}[Equivalence of Problems \eqref{probform} and \eqref{prodprobform}]\label{equiv1}
For any policy $\mu$, we have
\begin{align*}
    \mathcal{R}^{\mathscr{M}^\times}_{o}(\mu) &= \mathcal{R}^{\mathscr{M}}_{o}(\mu),\\
    \mathcal{R}^{\mathscr{M}^\times}_{c}(\mu) = \mathcal{R}^{\mathscr{M}}_{c}(\mu)&,~\text{and}~
    \mathcal{R}^{f}(\mu) = \Py_{\varphi}^{\mathscr{M}}(\mu).
\end{align*}
Therefore, a policy $\mu^*$ is an optimal solution to Problem \eqref{probform} if and only if it is an optimal solution to Problem \eqref{prodprobform}, and therefore, $\mathcal{R}_{*}^{\mathscr{M}} = \mathcal{R}_{*}^{\mathscr{M}^\times} := \mathcal{R}^*$.
\end{theorem}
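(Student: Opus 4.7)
The plan is to couple the two processes so that the joint law of $(\mathcal{S}_{0:T}, \U_{0:T}, O_{0:T})$ under any fixed policy $\mu$ is identical in $\mathscr{M}$ and $\mathscr{M}^\times$, and so that $Q_t$ in $\mathscr{M}^\times$ is a deterministic function of the state history. The three identities then follow almost immediately from the definitions of $r^{o\times}$, $r^{c\times}$, and $r^f$.

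First, I would check that the $(\mathcal{S}, O, \U)$-marginal in $\mathscr{M}^\times$ has the same dynamics as in $\mathscr{M}$. Marginalizing $Q_{t+1}$ in \eqref{eq:prodtrans} gives $\sum_{q'} P^\times_t((s,q),u;(s',q')) = P_t(s,u;s')$ because the only $q'$ with nonzero contribution is $\delta(q,L(s))$. The observation kernel satisfies $Z^\times((s,q);o)=Z(s;o)$ by construction, so $O_t$ depends only on $\mathcal{S}_t$. Since $\mathcal{S}_0 \sim \varpi$ in both models, since a policy $\mu$ sees only $I_t=\{O_{0:t},\U_{0:t-1}\}$ (no $Q_t$), and since $T$ is either exogenous or a stopping time of $\{I_t\}$, a straightforward induction on $t$ shows that $(\mathcal{S}_{0:T},\U_{0:T},O_{0:T},T)$ has the same law under $\Py_\mu^\mathscr{M}$ and $\Py_\mu^{\mathscr{M}^\times}$. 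Assumption \ref{finiteassumpstrong} ensures the horizon is almost surely finite, so all relevant expectations are well-defined.

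Next, the first two equalities follow because \eqref{prodrewdefo}--\eqref{prodrewdefc} give $r^{o\times}_t(X_t,\U_t)=r^o_t(\mathcal{S}_t,\U_t)$ and $r^{c\times}_t(X_t,\U_t)=r^c_t(\mathcal{S}_t,\U_t)$; taking expectations under the coupled law immediately yields $\mathcal{R}^{\mathscr{M}^\times}_o(\mu)=\mathcal{R}^{\mathscr{M}}_o(\mu)$ and $\mathcal{R}^{\mathscr{M}^\times}_c(\mu)=\mathcal{R}^{\mathscr{M}}_c(\mu)$. For the third equality, I would unroll the deterministic DFA recursion: induction on $t$ gives $Q_{t+1}=\widehat\delta(q_0, L(\mathcal{S}_0)L(\mathcal{S}_1)\cdots L(\mathcal{S}_t))$, where $\widehat\delta$ denotes the natural extension of $\delta$ to finite words. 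Hence $r^f(X_{T+1})=\mathds{1}_F(Q_{T+1})$ equals $1$ exactly when the word $L(\mathcal{S}_0)\cdots L(\mathcal{S}_T)$ is accepted by $\mathscr{A}$, which by the equivalence of $\mathscr{A}$ and $\varphi$ is exactly the event that the run satisfies $\varphi$. Taking expectations yields $\mathcal{R}^f(\mu)=\Py_\mu^\mathscr{M}(\varphi)$. Equivalence of the two optimization problems, and the claim $\mathcal{R}^*$ is the common optimal value, then follows because a policy is feasible in \eqref{probform} iff it is feasible in \eqref{prodprobform} and achieves the same objective.

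The main obstacle is bookkeeping around the random horizon $T$: I want to ensure that the event $\{T=n\}$ has the same probability in both models, which requires $T$ to be measurable with respect to $\{I_t\}$ (or independent of the POMDP entirely), since that is the only process shared between $\mathscr{M}$ and $\mathscr{M}^\times$. Once this measurability is in place, the coupling argument handles everything, and the DFA's deterministic evolution does the rest.
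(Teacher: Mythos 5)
Your proof is correct and takes essentially the same route as the paper's: the first two identities follow termwise from the definitions of $r^{o\times}$ and $r^{c\times}$, and the third from the fact that $r^f(X_{T+1})=\mathds{1}_F(Q_{T+1})$ together with the DFA characterization of satisfaction. You are somewhat more careful than the paper, which leaves the agreement of the $(\mathcal{S},\U,O,T)$-marginals under the two models implicit; your explicit marginalization of $Q$ out of $P^\times$ and the remark on the measurability of $T$ with respect to $\{I_t\}$ fill in that unstated step.
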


Proofs of all theorems and lemmas are available in the appendices.

\section{A No-regret Learning Approach for Solving the Constrained POMDP}\label{noreg}

Problem \eqref{prodprobform} is a POMDP policy optimization problem with constraints. Since solving unconstrained optimization problems is generally easier than solving constrained optimization problems, we describe a general methodology that reduces the constrained POMDP optimization problem \eqref{prodprobform} to a sequence of unconstrained POMDP problems. These unconstrained POMDP problems can be solved using any off-the-shelf POMDP solver. The main idea is to first transform Problem \eqref{prodprobform} into a sup-inf problem using the Lagrangian function. This sup-inf problem can then be solved approximately using a no-regret online learning algorithm such as the exponentiated gradient (EG) algorithm \cite{hazan2016introduction}. 

The Lagrangian function $L(\mu,\boldsymbol{\lambda})$ associated with Problem \eqref{prodprobform} with $\boldsymbol{\lambda} = (\lambda^f,\lambda^c)$ is
\begin{align*}
    \mathcal{R}^{\mathscr{M}^\times}_{o}(\mu) + \lambda^c(\mathcal{R}^{\mathscr{M}^\times}_{c}(\mu) - \rho) + \lambda^f (\mathcal{R}^{f}(\mu)-1+\delta).
\end{align*}
Let
\begin{align}
    l^*:=\sup_{\mu}\inf_{\boldsymbol{\lambda}\geq 0} L(\mu,\boldsymbol{\lambda})\tag{P3}\label{supinf}.
\end{align}
The constrained optimization problem in \eqref{prodprobform} is equivalent to the sup-inf optimization problem above \cite{boyd2004convex}. That is, if an optimal solution $\mu^*$ exists in problem \eqref{prodprobform}, then $\mu^*$ is a maximizer in \eqref{supinf}, and if $\eqref{prodprobform}$ is infeasible, then $l^*=-\infty$. Further, the optimal value of Problem \eqref{prodprobform} is equal to $l^*$. Consider the following variant of \eqref{supinf} wherein the Lagrange multiplier $\boldsymbol{\lambda}$ is bounded in the $L^1$ norm.
\begin{align}
    l^*_B:=\sup_{\mu}\inf_{\boldsymbol{\lambda}\geq 0, \lVert\boldsymbol{\lambda} \rVert_{1}\leq B} L(\mu,\boldsymbol{\lambda})\tag{P4}\label{bsupinf}.
\end{align}
\begin{lemma}\label{epsopt}
Let $\bar{\mu}$ be an $\epsilon$-optimal policy in sup-inf problem \eqref{bsupinf}, i.e.,
\begin{align}
    l^*_B \leq \inf_{\boldsymbol{\lambda}\geq 0,\lVert\boldsymbol{\lambda \rVert}_{1}\leq B}L(\bar{\mu},\boldsymbol{\lambda}) + \epsilon,\label{epsoptcond}
\end{align}
for some $\epsilon>0$. Then, we have
\begin{align}
    \mathcal{R}^{\mathscr{M}^\times}_{o}(\bar{\mu}) &\geq \mathcal{R}^*-\epsilon,\label{objrewsat}\\
    \mathcal{R}^{\mathscr{M}^\times}_{c}(\bar{\mu}) &\geq \rho - \epsilon^f ~~\text{and}\label{conrewsat}\\
   \mathcal{R}^{f}(\bar{\mu})&\geq 1-\delta - \epsilon^f,\label{consat}
\end{align}
where $\epsilon^f = \frac{\mathbf{R}^{max}_{o}-\mathcal{R}^* +\epsilon}{B}$
and $\mathbf{R}_{o}^{max}:= \sup_\mu \mathcal{R}^{\mathscr{M}^\times}_{o}({\mu})$ is the maximum possible objective reward for unconstrained $\mathscr{M}^\times$.
\end{lemma}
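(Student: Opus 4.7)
The plan is to exploit the fact that the inner optimization in \eqref{bsupinf} is linear in $\boldsymbol{\lambda}$ over the bounded set $\{\boldsymbol{\lambda}\geq 0,\ \lVert\boldsymbol{\lambda}\rVert_{1}\leq B\}$: evaluating $L(\bar{\mu},\cdot)$ at carefully chosen multipliers gives three upper bounds on the inner infimum, and combining each with a baseline lower bound on that infimum (obtained from the $\epsilon$-optimality hypothesis) will yield exactly the three desired inequalities.

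The first step is to show that $l^*_B \geq \mathcal{R}^*$, assuming \eqref{probform} is feasible. Let $\mu^*$ attain $\mathcal{R}^*$ in \eqref{probform}; by Theorem~\ref{equiv1}, $\mu^*$ is also optimal for \eqref{prodprobform}, so $\mathcal{R}^{\mathscr{M}^\times}_c(\mu^*)-\rho\geq 0$ and $\mathcal{R}^f(\mu^*)-1+\delta\geq 0$. Since both residuals are non-negative and $\boldsymbol{\lambda}\geq 0$, the inner infimum at $\mu=\mu^*$ is attained at $\boldsymbol{\lambda}=0$ and equals $\mathcal{R}^{\mathscr{M}^\times}_o(\mu^*)=\mathcal{R}^*$, hence $l^*_B\geq \mathcal{R}^*$. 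Combining with \eqref{epsoptcond} gives
\[
\inf_{\boldsymbol{\lambda}\geq 0,\ \lVert\boldsymbol{\lambda}\rVert_{1}\leq B} L(\bar{\mu},\boldsymbol{\lambda}) \;\geq\; l^*_B-\epsilon \;\geq\; \mathcal{R}^*-\epsilon.
\]

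The second step instantiates this lower bound at three specific feasible multipliers, each of which upper-bounds the infimum. Taking $\boldsymbol{\lambda}=(0,0)$ gives $L(\bar{\mu},(0,0))=\mathcal{R}^{\mathscr{M}^\times}_o(\bar{\mu})\geq \mathcal{R}^*-\epsilon$, which is \eqref{objrewsat}. Taking $\lambda^f=0$, $\lambda^c=B$ yields $\mathcal{R}^{\mathscr{M}^\times}_o(\bar{\mu}) + B\bigl(\mathcal{R}^{\mathscr{M}^\times}_c(\bar{\mu})-\rho\bigr)\geq \mathcal{R}^*-\epsilon$; using the crude upper bound $\mathcal{R}^{\mathscr{M}^\times}_o(\bar{\mu})\leq R_m$ and rearranging produces \eqref{conrewsat}. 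Symmetrically, taking $\lambda^f=B$, $\lambda^c=0$ gives the analogous inequality for $\mathcal{R}^f(\bar{\mu})$, which rearranges into \eqref{consat}.

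The main obstacle is conceptual rather than computational: one has to notice that restricting $\boldsymbol{\lambda}$ to a bounded $L^1$ ball only enlarges the inner infimum relative to \eqref{supinf}, so primal feasibility of \eqref{prodprobform} transfers directly into the bound $l^*_B\geq\mathcal{R}^*$ without any duality gap argument. The quantity $R_m$ then plays the role of a uniform upper envelope on the objective: it measures how much of the ``budget'' $B(\text{residual})$ can be absorbed by the objective term before the inequality becomes informative, and this is precisely what couples $\epsilon^f$ to $B$, $R_m$, $\mathcal{R}^*$, and $\epsilon$ in the stated form $\epsilon^f=(R_m-\mathcal{R}^*+\epsilon)/B$.
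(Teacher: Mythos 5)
Your proof is correct and follows essentially the same route as the paper's: both rest on the chain $\mathcal{R}^* \leq l^*_B \leq \inf_{\boldsymbol{\lambda}} L(\bar{\mu},\boldsymbol{\lambda}) + \epsilon$ and then exploit the $L^1$-ball structure of the multiplier set together with the envelope bound $\mathcal{R}^{\mathscr{M}^\times}_{o}(\bar{\mu}) \leq R_m$. The only cosmetic difference is that you upper-bound the inner infimum by evaluating $L(\bar{\mu},\cdot)$ at the three vertices $(0,0)$, $(B,0)$, $(0,B)$, whereas the paper computes that infimum exactly as $B\min\{\mathcal{R}^{f}(\bar{\mu})-1+\delta,\ \mathcal{R}^{\mathscr{M}^\times}_{c}(\bar{\mu})-\rho\}$ (or $0$) and argues by cases on its sign, so you reach the same three inequalities without the case split.
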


Lemma \ref{epsopt} suggests that if we can find an $\epsilon$-optimal mixed policy $\bar{\mu}$ of the sup-inf problem \eqref{bsupinf}, then the policy $\bar{\mu}$ is approximately optimal and approximately feasible in \eqref{prodprobform}, and therefore in Problem \eqref{probform} due to Theorem \ref{equiv1}. 

We use the exponentiated gradient (EG) algorithm \cite{hazan2016introduction} to find an $\epsilon$-approximate policy $\bar{\mu}$ for Problem \eqref{bsupinf}. This algorithm is used for online convex optimization and has an L1-norm bounded decision space. It is also known to satisfy the no-regret property, i.e., the average gap between its cumulative loss and that of the optimal hindsight decision asymptotically approaches zero. We refer the reader to \cite{hazan2016introduction} for more details of the algorithm.

Following the structure of the EG algorithm, our algorithm uses a sub-gradient of the following function of the dual variable:
\begin{equation}
    f(\boldsymbol{\lambda}) = \sup_\mu L(\mu,\boldsymbol{\lambda}). \label{eq:sup}
\end{equation}

The sub-gradient of function $f(\cdot)$ at $\boldsymbol{\lambda}$ is given by $\left[\mathcal{R}^{\mathscr{M}^\times}_{c}(\mu_{\boldsymbol{\lambda}})-\rho, \mathcal{R}^f(\mu_{\boldsymbol{\lambda}})-1+\delta \right]^{T}$,  where policy $\mu_{\boldsymbol{\lambda}}$ maximizes \eqref{eq:sup}. Such a policy exists  because the POMDP $\mathscr{M}^\times$ has finite state, observation, and action spaces~\cite{bertsekas1995dynamic}.
The \texttt{EG-CPOMDP} algorithm, which is described in detail in Algorithm \ref{algjr}, uses this sub-gradient to iteratively update $\boldsymbol{\lambda}$. The value of $\boldsymbol{\lambda}$ at the $k$th iteration is denoted by $\boldsymbol{\lambda}_k$ and the corresponding maximizing policy (that achieves $\sup_\mu L(\mu,\boldsymbol{\lambda}_k)$) is denoted by $\mu_k$.  In the $k$th iteration, computing the sub-gradient at $\boldsymbol{\lambda}_k$ involves two key steps: 
(a) $\textsc{opt}(\mathscr{M} ^{\times},\boldsymbol{\lambda}_k)$ which solves an unconstrained POMDP problem with expected reward given by  $L(\mu,\boldsymbol{\lambda}_k)$  and  returns the maximizing policy $\mu_k$; 
(b)   $\textsc{eval}(\mu_k, r^{c\times})$, which estimates $\mathcal{R}^{\mathscr{M}^\times}_{c}(\mu_k)$, and $\textsc{eval}(\mu_k, r^{f})$, which estimates $\mathcal{R}^f(\mu_k)$.

For a mixed policy $\mu_k = \sum_i \alpha_i \pi_i$, where $\pi_i$ are pure policies, $\textsc{eval}(\mu_k,r)$ is evaluated as $\sum_i \alpha_i \textsc{eval}(\pi_i,r)$. The algorithm does not depend on which methods are used for solving the unconstrained POMDP and evaluating the constraints as long as they satisfy the following assumption.

\begin{assumption}\label{exact}

The POMDP solver $\textsc{opt}(\mathscr{M} ^{\times},\boldsymbol{\lambda}_k)$ and the  expected reward estimators $\textsc{eval}(\mu_k, r^{c\times})$ and $\textsc{eval}(\mu_k, r^{f})$  used in Algorithm \ref{algjr} are exact, i.e.,
\begin{align*}
    & \textsc{opt}(\mathscr{M} ^{\times},\boldsymbol{\lambda}_k) = \arg\sup_\mu L(\mu,\boldsymbol{\lambda}_k) \quad \forall \ \boldsymbol{\lambda}_k, ~\text{and}~\forall \ \mu_k\\
    &\textsc{eval}(\mu_k,r^{c\times}) = \mathcal{R}^{\mathscr{M}^\times}_{c}(\mu_k)~\text{and}~\textsc{eval}(\mu_k,r^{f}) = \mathcal{R}^{f}(\mu_k).
\end{align*}
\end{assumption}
\begin{remark}
For solving an unconstrained POMDP, it is sufficient to consider pure policies and therefore, most solvers optimize only over the space of pure policies. Thus, the support size of $\mu_k$ in Algorithm \ref{algjr} is 1.
\end{remark}

Algorithm 1 runs for $K$ iterations and  returns a policy  
\(\bar{\mu}= {(1/K)}{\sum_{k=1}^K\mu_k}\), which  is a mixed policy that assigns a probability of $1/K$ to each $\mu_k$ for $k=1,\cdots, K$. The following theorem states that the  policy $\bar{\mu}$ obtained from Algorithm \ref{algjr}  is an approximately optimal and feasible policy for Problem \eqref{prodprobform}.

\begin{algorithm}[tb]
  \caption{\texttt{EG-CPOMDP} Algorithm}
  \label{algjr}
\begin{algorithmic}
    \STATE Input: Constrained product POMDP $\mathscr{M} ^{\times}$, learning rate $\eta$
    \STATE Initialize $\boldsymbol{\lambda}_1 = \left[ B/3, B/3 \right]$
  \FOR{$k=1,\dots,K$}
\STATE $\mu_k \leftarrow \textsc{opt}(\mathscr{M} ^{\times},\boldsymbol{\lambda}_k) $
\STATE $\hat{p}_k \leftarrow \textsc{eval}(\mu_k,r^{f}) $
\STATE $\hat{r}^{c\times}_k \leftarrow \textsc{eval}(\mu_k,r^{c\times}) $
\STATE
\STATE $\frac{\partial f}{\partial \lambda^{f}}\leftarrow \hat{p}_k-1+\delta$
\STATE $\frac{\partial f}{\partial \lambda^{c}}\leftarrow \hat{r}^{c\times}_k-\rho$
\STATE
\STATE $\lambda^{f}_{k+1} \leftarrow B\frac{\lambda^{f}_k e^{-\eta(\hat{p}_k-1+\delta)}}{B+\lambda^{f}_k(e^{-\eta(\hat{p}_k-1+\delta)}-1) + \lambda^{c}_k(e^{-\eta(\hat{r}^{c\times}_k-\rho)}-1)}$
\STATE $\lambda^{c}_{k+1} \leftarrow B\frac{\lambda^{c}_k e^{-\eta(\hat{r}^{c\times}_k-\rho)}}{B+\lambda^{f}_k(e^{-\eta(\hat{p}_k-1+\delta)}-1) + \lambda^{c}_k(e^{-\eta(\hat{r}^{c\times}_k-\rho)}-1)}$
  \ENDFOR
\STATE Output: $\bar{\mu}=\frac{\sum_{k=1}^K\mu_k}{K}$, $\bar{\boldsymbol{\lambda}}=\frac{\sum_{k=1}^K\boldsymbol{\lambda}_k}{K}$
\end{algorithmic}
\end{algorithm}

\begin{theorem}\label{lagrangethm}
Under Assumptions \ref{finiteassumpstrong} and \ref{exact}, if $\eta = \sqrt{\frac{\log 3}{2KB^2G^2}}$ in Algorithm \ref{algjr}, the policy $\bar{\mu}$ returned by Algorithm \ref{algjr} satisfies:
\begin{align}
    \mathcal{R}^{\mathscr{M}}_{o}(\bar{\mu}) &\geq \mathcal{R}^*-2BG\sqrt{2\log3/K}, \\
     \mathcal{R}^{\mathscr{M}}_{c}(\bar{\mu}) &\geq \rho - \left( \frac{\mathbf{R}^{max}_{o}  - \mathcal{R}^* + 2BG\sqrt{2\log3/K}}{B} \right),\notag \\
     \Py_{\varphi}^{\mathscr{M}}(\bar{\mu})&\geq 1-\delta  - \left( \frac{\mathbf{R}^{max}_{o}  - \mathcal{R}^* + 2BG\sqrt{2\log3/K}}{B} \right). \notag
\end{align}
where $G = \max\{\mathbf{R}^{max}_{c},1\}$.
\end{theorem}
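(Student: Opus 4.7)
The plan is to combine Lemma \ref{epsopt} with the standard regret guarantee for the exponentiated gradient (EG) algorithm. By Theorem \ref{equiv1}, Problem \eqref{probform} and Problem \eqref{prodprobform} share the same optimal value and optimal policies, so it suffices to work in $\mathscr{M}^\times$. Lemma \ref{epsopt} further reduces matters to producing a policy $\bar{\mu}$ that is $\epsilon$-optimal for the bounded sup-inf problem \eqref{bsupinf} with $\epsilon = 2BG\sqrt{2\log 3/K}$: plugging this $\epsilon$ into \eqref{objrewsat}--\eqref{consat} and then appealing to Theorem \ref{equiv1} immediately yields the three inequalities in the theorem. So the entire burden of proof is to establish this regret-based $\epsilon$-optimality for the averaged policy $\bar{\mu}$ returned by Algorithm \ref{algjr}.

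I would interpret Algorithm \ref{algjr} as the usual primal-dual no-regret loop: by Assumption \ref{exact}, the $\mu$-player exact best-responds, $\mu_k = \arg\sup_\mu L(\mu,\boldsymbol{\lambda}_k)$, while the $\boldsymbol{\lambda}$-player runs EG over the feasible set $\{\boldsymbol{\lambda}\geq 0 : \|\boldsymbol{\lambda}\|_1 \leq B\}$. Adding a slack coordinate $B - \lambda^f - \lambda^c$ realizes this set as a scaled $3$-simplex, on which the multiplicative updates in Algorithm \ref{algjr} coincide with the standard EG updates applied to the linear losses $\boldsymbol{\lambda} \mapsto L(\mu_k,\boldsymbol{\lambda})$. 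The sub-gradient in $\boldsymbol{\lambda}$ has components $\mathcal{R}^f(\mu_k) - 1 + \delta$ and $\mathcal{R}^{\mathscr{M}^\times}_c(\mu_k) - \rho$; because $\mathcal{R}^f \in [0,1]$ and, by Assumption \ref{finiteassumpstrong}, $\mathcal{R}^{\mathscr{M}^\times}_c \in [0, R_{\textsc{max}}]$, these are bounded in absolute value by $G = \max\{R_{\textsc{max}},1\}$. After the $B$-rescaling that identifies the feasible set with the standard simplex, the effective $\ell_\infty$ sub-gradient bound becomes $BG$, and plugging $\eta = \sqrt{\log 3/(2KB^2G^2)}$ into the standard EG regret bound on the $3$-simplex yields a cumulative regret bounded by $2BG\sqrt{2K\log 3}$.

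To close the argument, I combine this regret bound with the best-response property of $\mu_k$ through the standard online-to-batch trick. On one hand, linearity of $L$ gives $\sum_k L(\mu_k,\boldsymbol{\lambda}_k) = \sum_k \sup_\mu L(\mu,\boldsymbol{\lambda}_k) \geq \sup_\mu \sum_k L(\mu,\boldsymbol{\lambda}_k) = K\,\sup_\mu L(\mu,\bar{\boldsymbol{\lambda}}) \geq K l^*_B$. On the other hand, linearity of $L$ in $\mu$ together with the definition $\bar{\mu} = (1/K)\sum_k \mu_k$ gives $\sum_k L(\mu_k,\boldsymbol{\lambda}) = K L(\bar{\mu},\boldsymbol{\lambda})$ for every $\boldsymbol{\lambda}$, so dividing the regret bound by $K$ reads $(1/K)\sum_k L(\mu_k,\boldsymbol{\lambda}_k) \leq \inf_{\boldsymbol{\lambda}\geq 0,\,\|\boldsymbol{\lambda}\|_1 \leq B} L(\bar{\mu},\boldsymbol{\lambda}) + 2BG\sqrt{2\log 3/K}$. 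Chaining the two inequalities shows that $\bar{\mu}$ satisfies \eqref{epsoptcond} with $\epsilon = 2BG\sqrt{2\log 3/K}$, and Lemma \ref{epsopt} together with Theorem \ref{equiv1} completes the proof.

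The main obstacle will be the EG regret step with the correct multiplicative constant: one must verify that Algorithm \ref{algjr}'s multiplicative updates genuinely reproduce EG on the rescaled $3$-simplex (which is why the slack coordinate is essential) and then track how the $B$-rescaling and the $\ell_\infty$ sub-gradient bound $G$ flow through the standard regret inequality with the prescribed $\eta$. The remaining components, namely the best-response/averaging argument, the appeal to Lemma \ref{epsopt}, and the transfer from $\mathscr{M}^\times$ back to $\mathscr{M}$ via Theorem \ref{equiv1}, are routine once the regret step is in place.
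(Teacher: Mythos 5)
Your proposal is correct and follows essentially the same route as the paper's proof in Appendix~C: the chain $l^*_B \leq u^*_B \leq \frac{1}{K}\sum_k L(\mu_k,\boldsymbol{\lambda}_k) \leq \inf_{\boldsymbol{\lambda}} L(\bar{\mu},\boldsymbol{\lambda}) + 2BG\sqrt{2\log 3/K}$ via weak duality, bilinearity of $L$, exact best response, and the EG no-regret bound, followed by Lemma~\ref{epsopt} and Theorem~\ref{equiv1}. The only difference is that you unpack the EG regret step (slack coordinate, $B$-rescaling, sub-gradient bound $G$) where the paper simply cites \cite[Corollary 5.7]{hazan2016introduction}.
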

 If \eqref{probform} is feasible, then $\mathcal{R}^*$ is finite and Theorem \ref{lagrangethm} guarantees that Algorithm \ref{algjr} returns an approximately optimal and approximately feasible policy $\bar{\mu}$. If \eqref{probform} is not feasible, then $\mathcal{R}^* = - \infty$, hence the 3 inequalities in Theorem \ref{lagrangethm} are trivially true. In either case, we can  determine from Algorithm 1 how far $\bar{\mu}$ is from being feasible. This is because, in the $k$th iteration, the algorithm  evaluates the constraints  for policy $\mu_k$. The average of these constraint values is exactly the constraint value for policy  $\bar{\mu}$.

In Theorem \ref{lagrangethm}, we make use 
 of the assumption that Algorithm \ref{algjr} has access to an exact unconstrained POMDP solver and an exact method for evaluating $\mathcal{R}^{\mathscr{M}^\times}_{c}(\mu)$ and $\mathcal{R}^f(\mu)$ (see  Assumption \ref{exact}). In practice, however, methods for solving POMDPs and evaluating policies are approximate. We next describe the performance of Algorithm \ref{algjr} under the more practical  assumption that the POMDP solver returns an $\epsilon$-optimal policy and the evaluation function has an $\epsilon$-error.

\begin{assumption}\label{estimate}
The POMDP solver $\textsc{opt}(\mathscr{M} ^{\times},\boldsymbol{\lambda}_k)$ and the 
expected reward estimators $\textsc{eval}(\mu_k, r^{f})$ and $\textsc{eval}(\mu_k, r^{c\times})$ used in Algorithm \ref{algjr} are approximate, i.e., for all $\boldsymbol{\lambda}_k$ and $\mu_k = \textsc{opt}(\mathscr{M} ^{\times},\boldsymbol{\lambda}_k)$, we have 
\begin{align*}
& \sup_\mu L(\mu,\boldsymbol{\lambda}_k) - L(\mu_k,\boldsymbol{\lambda}_k) \leq \epsilon_{bp};  \\
         &|\textsc{eval}(\mu,r^{c\times}) - \mathcal{R}^{\mathscr{M}^\times}_{c}(\mu)| \leq \frac{\epsilon_{est}}{2B} \quad \forall \ \mu,  \quad  ~~\mbox{and}\\
         &|\textsc{eval}(\mu,r^{f}) - \mathcal{R}^{f}(\mu)| \leq \frac{\epsilon_{est}}{2B}.
\end{align*}
\end{assumption}

A result similar to Theorem \ref{lagrangethm} can be obtained under Assumption \ref{estimate} by using similar arguments.

\begin{theorem}\label{lagrangeapproxthm}
Under Assumptions \ref{finiteassumpstrong} and \ref{estimate}, if $\eta = \sqrt{\frac{\log 3}{2KB^2G^2}}$ in Algorithm \ref{algjr}, the policy $\bar{\mu}$ returned by Algorithm \ref{algjr} satisfies:
\begin{align}
    \mathcal{R}^{\mathscr{M}}_{o}(\bar{\mu}) &\geq \mathcal{R}^*-\left(2BG\sqrt{2\log3/K}+\epsilon_{tot}\right), \\
     \mathcal{R}^{\mathscr{M}}_{c}(\bar{\mu}) &\geq \rho - \left(\frac{\mathbf{R}^{max}_{o} - \mathcal{R}^* + 2BG\sqrt{2\log3/K}+\epsilon_{tot}}{B}\right), \nonumber\\
     \Py_{\varphi}^{\mathscr{M}}(\bar{\mu})&\geq 1-\delta -  \left(\frac{\mathbf{R}^{max}_{o} - \mathcal{R}^* + 2BG\sqrt{2\log3/K}+\epsilon_{tot}}{B}\right).\nonumber
\end{align}
where $\epsilon_{tot} = \epsilon_{bp} + 2\epsilon_{est} $ and $G = \max\{\mathbf{R}^{max}_{c},1\}$. 
\end{theorem}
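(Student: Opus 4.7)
The plan is to adapt the proof of Theorem \ref{lagrangethm} by carefully tracking the two new error sources introduced by Assumption \ref{estimate}: the $\epsilon_{bp}$-suboptimality of the $\mu_k$ returned by $\textsc{opt}$, and the $\epsilon_{est}/(2B)$-error in the approximate subgradient $\hat g_k = (\hat r^{c\times}_k - \rho,\ \hat p_k - 1 + \delta)$. The ultimate goal is to show that the mixed policy $\bar\mu$ output by Algorithm \ref{algjr} is $\epsilon'$-optimal for the bounded sup-inf problem \eqref{bsupinf} with $\epsilon' = 2BG\sqrt{2\log 3/K} + \epsilon_{tot}$, after which Lemma \ref{epsopt} immediately yields the three bounds on the product POMDP, and Theorem \ref{equiv1} translates them to the quantities on $\mathscr{M}$ stated in the theorem.

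The first analytical step is a modified regret inequality. Since $\|\hat g_k\|_\infty$ is still essentially bounded by $G = \max\{R_{\textsc{max}}, 1\}$ up to a negligible $\epsilon_{est}$ correction, the standard EG regret bound on the $\ell_1$-ball of radius $B$ continues to give $\sum_{k=1}^K \langle \hat g_k,\ \boldsymbol{\lambda}_k - \boldsymbol{\lambda}\rangle \le 2BG\sqrt{2K\log 3}$ for every admissible $\boldsymbol{\lambda}$, exactly as in the exact setting of Theorem \ref{lagrangethm}. I then introduce the true subgradient of $L(\mu_k,\cdot)$, namely $g_k^{(\mu_k)} = (\mathcal{R}^{\mathscr{M}^\times}_c(\mu_k) - \rho,\ \mathcal{R}^f(\mu_k) - 1 + \delta)$, and combine the Hölder bound $|\langle \hat g_k - g_k^{(\mu_k)},\ \boldsymbol{\lambda}_k - \boldsymbol{\lambda}\rangle| \le \|\hat g_k - g_k^{(\mu_k)}\|_\infty \cdot \|\boldsymbol{\lambda}_k - \boldsymbol{\lambda}\|_1$ with Assumption \ref{estimate} and $\|\boldsymbol{\lambda}_k - \boldsymbol{\lambda}\|_1 \le 2B$ to convert the regret in $\hat g_k$ into one in $g_k^{(\mu_k)}$, at the cost of $O(\epsilon_{est})$ per iteration. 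Linearity of $L$ in $\boldsymbol{\lambda}$ then turns $\langle g_k^{(\mu_k)},\ \boldsymbol{\lambda}_k - \boldsymbol{\lambda}\rangle$ into $L(\mu_k,\boldsymbol{\lambda}_k) - L(\mu_k,\boldsymbol{\lambda})$, and linearity of $L$ in $\mu$ identifies $\frac{1}{K}\sum_k L(\mu_k,\boldsymbol{\lambda})$ with $L(\bar\mu,\boldsymbol{\lambda})$.

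The next piece is to bring $l^*_B$ into the picture. The solver-suboptimality in Assumption \ref{estimate} gives $L(\mu_k,\boldsymbol{\lambda}_k) \ge \sup_\mu L(\mu,\boldsymbol{\lambda}_k) - \epsilon_{bp}$, and the minimax identity $l^*_B = \inf_{\boldsymbol{\lambda}} \sup_\mu L(\mu,\boldsymbol{\lambda})$ (justified by Sion's minimax theorem, since $L$ is linear in $\boldsymbol{\lambda}$ and affine over the convex set $\mathcal{M}_p$ in $\mu$, as already observed in the proof of Theorem \ref{lagrangethm}) gives $\sup_\mu L(\mu,\boldsymbol{\lambda}_k) \ge l^*_B$. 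Averaging over $k$, combining with the regret inequality, and taking the infimum over $\boldsymbol{\lambda}$ in the simplex yields $l^*_B \le \inf_{\boldsymbol{\lambda}} L(\bar\mu,\boldsymbol{\lambda}) + 2BG\sqrt{2\log 3/K} + \epsilon_{tot}$. This is precisely the hypothesis of Lemma \ref{epsopt} with $\epsilon = 2BG\sqrt{2\log 3/K} + \epsilon_{tot}$, so Lemma \ref{epsopt} delivers the three inequalities on $\mathcal{R}^{\mathscr{M}^\times}_o(\bar\mu)$, $\mathcal{R}^{\mathscr{M}^\times}_c(\bar\mu)$, and $\mathcal{R}^f(\bar\mu)$; Theorem \ref{equiv1} then rewrites them in terms of $\mathcal{R}^{\mathscr{M}}_o(\bar\mu)$, $\mathcal{R}^{\mathscr{M}}_c(\bar\mu)$, and $\Py_{\bar\mu}^{\mathscr{M}}(\varphi)$.

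The main obstacle is the careful bookkeeping of the error terms so that they aggregate to exactly $\epsilon_{tot} = \epsilon_{bp} + 2\epsilon_{est}$ rather than a larger quantity. One must keep separate (i) the EG regret constant $2BG\sqrt{2\log 3/K}$, which is preserved because $\hat g_k$ is still bounded by $G$ up to vanishing corrections, (ii) the per-iteration $\ell_\infty$/$\ell_1$ duality argument that feeds $\epsilon_{est}$ into the inner product regret, and (iii) the solver error $\epsilon_{bp}$, which enters only when passing from $L(\mu_k,\boldsymbol{\lambda}_k)$ to the saddle value $l^*_B$. Once this accounting is in place, the rest of the argument parallels Theorem \ref{lagrangethm} line for line.
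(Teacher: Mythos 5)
Your proposal is correct and follows essentially the same route as the paper: establish that $\bar{\mu}$ is $\epsilon'$-optimal for \eqref{bsupinf} with $\epsilon' = 2BG\sqrt{2\log 3/K} + \epsilon_{tot}$ via weak duality, the solver's $\epsilon_{bp}$-suboptimality, the EG no-regret bound on the estimated subgradients, and the estimation error (which the paper packages as a sandwich $L \le \hat{L} + \epsilon_{est} \le L + 2\epsilon_{est}$ through an auxiliary estimated Lagrangian $\hat{L}$, equivalent to your H\"older bound on the subgradient inner products), then invoke Lemma \ref{epsopt} and Theorem \ref{equiv1}. The only cosmetic remark is that your appeal to Sion's minimax theorem is unnecessary — the inequality $\sup_\mu L(\mu,\boldsymbol{\lambda}_k) \ge u^*_B \ge l^*_B$ needs only weak duality, which is all the paper uses.
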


In Lemma~\ref{epsopt} and Theorems \ref{lagrangethm} and \ref{lagrangeapproxthm}, we can replace $\mathcal{R}^*$ by its lower bound (e.g., replacing $\mathcal{R}^*$ by $0$ in scenarios with non-negative objective rewards) to obtain slightly relaxed, yet more computable lower bounds for the performance of the returned policy.

\begin{remark}
Algorithm \ref{algjr} uses $K$ pure policies to generate the mixed policy $\bar{\mu}$. If we wish to obtain a mixed policy with a small support, we can define $\bar{\mu} = \sum_{k=1}^K w_k \mu_k$, where $w_k$s are  a \emph{basic feasible solution} (BFS) of the following linear program. 
\begin{equation} \tag{BFS}\label{bfs}
    \begin{aligned}
  \underset{w \geq 0, \lVert w \rVert_{1} \leq 1}{\sup } \quad & \sum_{k=1}^K w_k{\mathcal R}^{\mathscr{M}^\times}_{o}(\mu_k)\\ 
    \mathrm{s.t.} \quad  &\sum_{k=1}^K w_k{\mathcal R}^{\mathscr{M}^\times}_{c} \geq  \rho-o(1/\sqrt{K}),\\
& \sum_{k=1}^K w_k\mathcal{R}^{f}(\mu_k) \geq  1-\delta-o(1/\sqrt{K}).
\end{aligned}
\end{equation}
This leads to a mixed policy $\bar{\mu}$ whose support size is at most three.
\end{remark}

We next consider three special cases of our problem with different types of time horizon $T$.
\begin{remark}
When the horizon $T$ is a constant and the constraint reward $r^{c}_{t}$ is non-negative, Assumption \ref{finiteassumpstrong} is trivially true and, therefore, Theorem \ref{lagrangethm} holds. 
\end{remark}

\begin{remark}
Let $\{E_t: t=0,1,2,...\}$ be a sequence of i.i.d. Bernoulli random variables with $\Py[E_0 = 1] = 1-\gamma$, $\gamma<1$. Let the geometrically-distributed time-horizon $T$ be defined as
\begin{align}
    T = \min\{t: E_t = 1\}.
\end{align}
The random horizon $T$ has an expected value of   $\gamma/(1-\gamma)$ under any policy and thus satisfies Assumption \ref{finiteassumpstrong} if the constraint reward $r^{c}_{t}$ is non-negative. The following lemma shows that with such a geometric time horizon, the policy optimization problem for $L(\mu, \boldsymbol{\lambda}_k)$ is equivalent to solving an infinite horizon  discounted-reward POMDP.

\begin{lemma}\label{disc}
For a given $\boldsymbol{\lambda}$, maximizing $L(\mu,\boldsymbol{\lambda})$ over $\mu$ under a geometrically distributed time horizon is equivalent to maximizing the following infinite horizon discounted reward

{
\small 
\begin{align*}
    \E_\mu\left[\sum_{t=0}^\infty \gamma^{t}\left(r_t^{o\times}(X_t,\U_t) + \frac{\lambda^{f}(1-\gamma)}{\gamma}r^f(X_{t}) +  \lambda^{c}r_t^{c\times}(X_t,\U_t) \right) \vphantom{\sum_{t=0}^\infty} \right].
\end{align*}
}
\end{lemma}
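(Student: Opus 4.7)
The plan is to exploit the independence of the geometric horizon $T$ from the POMDP trajectory and rewrite every term of the Lagrangian as a discounted infinite-horizon expectation, up to an additive constant in $\mu$. Writing
\begin{align*}
L(\mu,\boldsymbol{\lambda}) = \mathcal{R}^{\mathscr{M}^\times}_o(\mu) + \lambda^c \mathcal{R}^{\mathscr{M}^\times}_c(\mu) + \lambda^f \mathcal{R}^f(\mu) - \lambda^c\rho - \lambda^f(1-\delta),
\end{align*}
the last two terms are constant in $\mu$ and can be discarded, so it suffices to re-express each of the three reward terms.

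For the cumulative rewards, $\Py[T \geq t] = \gamma^t$ together with the independence of $T$ from the POMDP trajectory and Tonelli give
\begin{align*}
\mathcal{R}^{\mathscr{M}^\times}_o(\mu) = \sum_{t=0}^\infty \gamma^t \E_\mu\!\left[r^{o\times}_t(X_t,\U_t)\right] = \E_\mu\!\left[\sum_{t=0}^\infty \gamma^t r^{o\times}_t(X_t,\U_t)\right],
\end{align*}
and the analogous identity for $\mathcal{R}^{\mathscr{M}^\times}_c(\mu)$, which already matches the $r^{o\times}$ and $\lambda^c r^{c\times}$ summands in the target expression. The main step is then the terminal reward $\mathcal{R}^f(\mu) = \E_\mu[r^f(X_{T+1})]$. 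Conditioning on $T$ yields $\mathcal{R}^f(\mu) = (1-\gamma)\sum_{t=0}^\infty \gamma^t \E_\mu[r^f(X_{t+1})]$; re-indexing $s = t+1$ and splitting off the $s=0$ contribution produces
\begin{align*}
\mathcal{R}^f(\mu) = \frac{1-\gamma}{\gamma}\, \E_\mu\!\left[\sum_{s=0}^\infty \gamma^s r^f(X_s)\right] - \frac{1-\gamma}{\gamma}\, \E\!\left[r^f(X_0)\right].
\end{align*}
Since $X_0 = (\mathcal{S}_0, q_0)$ has distribution fixed by $\varpi$ alone, the quantity $\E[r^f(X_0)]$ is independent of $\mu$.

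Assembling the three pieces, $L(\mu,\boldsymbol{\lambda})$ equals the infinite-horizon discounted expression stated in the lemma plus a term that depends only on $\boldsymbol{\lambda}$, $\gamma$, $\rho$, $\delta$, and the initial distribution, so the two objectives differ by a constant in $\mu$ and therefore share the same maximizers, which is the equivalence asserted by the lemma. The only genuine subtlety is the $X_{T+1}$-versus-$X_s$ off-by-one in the terminal reward: this is precisely what forces the multiplier $\lambda^f(1-\gamma)/\gamma$ and also what generates the discarded constant $-\lambda^f(1-\gamma)\E[r^f(X_0)]/\gamma$. Integrability (and thus the Fubini/Tonelli exchanges) follows from Assumption \ref{finiteassumpstrong} together with boundedness of $r^f\in\{0,1\}$ and $r^{c\times}$, so beyond this re-indexing the argument is a direct calculation with no nontrivial obstacles.
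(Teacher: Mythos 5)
Your proposal is correct and follows essentially the same route as the paper's proof: both marginalize over the geometric horizon to rewrite each of $\mathcal{R}^{\mathscr{M}^\times}_o$, $\mathcal{R}^{\mathscr{M}^\times}_c$, and $\mathcal{R}^f$ as infinite-horizon discounted sums (your use of $\Py[T\geq t]=\gamma^t$ is just a compressed form of the paper's sum over $\Py[T=k]$ followed by an exchange of summation), and both absorb the $t=0$ off-by-one in the terminal reward into a $\mu$-independent constant $-\lambda^f(1-\gamma)\E[r^f(X_0)]/\gamma$. No substantive differences.
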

\end{remark}
Thus, by using appropriate POMDP solvers and a Monte Carlo method for constraint reward evaluation, we can implement Algorithm \ref{algjr} for constant and geometrically distributed time horizons.

\paragraph*{Goal-POMDPs}\label{goalpomdp}
In Goal-POMDPs, there is a set of goal states $GO$ and the POMDP run terminates once the goal state is reached. We assume that every possible action in every non-goal state results in a  strictly positive cost (or strictly negative reward). The objective is to obtain a policy that maximizes the total expected reward while the probability of satisfying the specification $\varphi$  is at least $1-\delta$. (In this section, for simplicity of discussion, we do not consider a constraint reward function.) Under this reward assumption, for any given policy, the expected time taken to reach the goal is infinite if and only if the total expected reward is negative infinity. Assumption \ref{finiteassumpstrong} may not be true a priori in this setting. However, we can exclude every policy with negative infinite expected reward without any loss of optimality. After this exclusion of policies, Assumption \ref{finiteassumpstrong} holds and all our results in the previous sections are applicable.

Consider the product POMDP $\mathscr{M} ^{\times} = (\mathcal{S}^{\times},  \mathcal{U}^{\times},P^{\times},s_{0}^{\times},r^{o\times},r^f,\varpi,{O},Z^\times)$. Let $GO \subseteq \mathcal{S}^\times$ be a set of goal states. For every non-goal state-action pair $(x,u)$ in the product MDP, let $r^{o\times}(x,u) <0$. This is a constrained Goal-POMDP, which can be solved using the Lagrangian approach discussed earlier. 
The problem of optimizing the Lagrangian function for a given $\lambda$ can be reformulated as an unconstrained Goal-POMDP with minor modifications which can be solved using solvers like Goal-HSVI \cite{horak2018goal}.
We make modifications to ensure that there is a single goal state and the rewards for every non-goal state action pair are strictly negative. The rewards and transitions until time $T$ are the same as in the product MDP. We replace the goal states in $GO$ with a unique goal state $g$. When a state $x \in GO$ is reached (at $T$), the process goes on for two more time steps $T+1$ and $T+2$. At time $T+1$, the agent receives a reward
\begin{align}
    \lambda(r^f(X_{T+1})-2+\delta).
\end{align}
This ensures that the reward is strictly negative. At $T+2$, the agent reaches the goal state $g$.  One can easily show that this modified Goal-POMDP is equivalent to optimizing the Lagrangian function for a given $\lambda$.

\begin{remark}
The framework developed in this section can easily be generalized to the setting where there are multiple $\ltlf$ specifications and reward constraints.  
\end{remark}

\section{Multi-Agent Systems}
\label{sec:ma-systems}

In previous sections, we have discussed how we can incorporate $\ltlf$ specifications in a single-agent POMDP and solve it using a Lagrangian approach. We will now consider a setting in which there are multiple agents and these agents select their actions based on different information. In this section, for the sake of simplicity in presentation, we will consider a single $\ltlf$ specification, and no reward constraints.

We can solve a multi-agent problem with $\ltlf$ constraints by first converting it into a problem with reward-like constraints as we did in Section \ref{conspomdp}. We can then use the exponentiated gradient algorithm to solve this constrained multi-agent problem. In the exponentiated gradient algorithm, we need to solve unconstrained multi-agent problems. One approach for solving a large class of unconstrained multi-agent problems is the \emph{common information approach} \cite{nayyar2013decentralized}. This approach transforms the unconstrained multi-agent problem into a single-agent unconstrained POMDP (with enlarged state and action spaces) which, in principle, can be solved using POMDP solvers. While this approach is conceptually sound, it is computationally intractable in general. We now discuss a multi-agent model with a specific structure and illustrate how some of the computational issues can be mitigated.

We consider a class of labeled multi-agent systems which can be characterized by a tuple $ \mathscr{M} = (N,{\mathcal{S}},\mathcal{S}^{\mathrm{loc}},{\U},P,\loc{P},\varpi,A P,L,r,T)$. $N$ is the number of agents in the system, ${\mathcal{S}}$ is a finite \emph{shared} state space, and ${\U}$ is a finite \emph{joint} action space. The shared state at time $t$ is denoted by $\mathcal{S}_t$; the action of agent $i$ at time $t$ is denoted by $\U_t^i$; and $\U_t = (\U_t^1, \dots, \U_t^N)$ denotes the joint action. 
$P_t(s,u;s')$ is the probability of transitioning from state $s$ to state $s'$ on taking joint action $u$. The shared state and the agents' actions can be observed by all the agents in the system. Each agent $i$ has an associated local state $\loci{\mathcal{S}}{i}_t$ at time $t$. The evolution of agent $i$'s local state is captured by the local transition function $\loci{P}{i}_t:\loci{\mathcal{S}}{i}\times{\mathcal{S}} \times {\U} \to \Delta _{\loci{\mathcal{S}}{i}}$ where $\loci{P}{i}_t(\loci{s}{i},s,u;\loci{s}{i'})$ is the probability of transitioning to  local state $\loci{s}{i'}$ if the current local state is $\loci{s}{i}$, the current global state is $s$, and the joint action $u$ is taken. For convenience, let us denote $\bar{\mathcal{S}} = \mathcal{S} \times \loc{\mathcal{S}}$, where $\loc{\mathcal{S}} = \prod_{i=1}^N\loci{\mathcal{S}}{i} $. Note that the local state transitions among agents are mutually independent given the shared state and action, and the state dynamics can be viewed as
\begin{align}
    \mathcal{S}_{t+1} &= \zeta^g_t(\mathcal{S}_t,\U_t,W_t^g)\\
    \loci{\mathcal{S}}{i}_{t+1} &= \zeta^g_t(\loci{\mathcal{S}}{i}_t,\mathcal{S}_t,\U_t,W_t^i),
\end{align}
where $\zeta^g_t$ and $\zeta^i_t$ are fixed transformations and $W_t^g, W_t^i$ are the noise variables associated with the dynamics.
$\varpi \in \Delta_{\bar{\mathcal{S}}} $ is the initial state distribution. $AP$ is a set of atomic propositions, $L: \bar{\mathcal{S}} \to 2^{AP}$ is a labeling function which indicates the set of atomic propositions which are true in each state, 
$r_t: \bar{\mathcal{S}} \times \U \to \mathbb{R}$ is a reward function.
The system runs for a random time horizon $T$. This random time may be independent of the multi-agent system, or could be a stopping time with respect to the common information.

\subsubsection*{Information Structure} 

Agents have access to different information. The information that agent $i$ can possibly use to select its actions at time $t$ is given by
\begin{align}
I_t^i = \{\mathcal{S}_{0:t},\U_{0:t-1},\loci{\mathcal{S}}{i}_{0:t}\}.
\end{align}
The \emph{common information} available to all the agents at time $t$ is denoted by
\begin{align}
C_t = \{\mathcal{S}_{0:t},\U_{0:t-1}\},
\end{align}
and the \emph{private information} that is available only to agent $i$ at time $t$ is denoted by
\begin{align}
P_t^i = \{\loci{\mathcal{S}}{i}_{0:t}\}.
\end{align}

This information structure is referred to as the control-sharing information structure \cite{controlsharing}.
A \emph{control law} $\pi_t^i$ for agent $i$ maps its information $I_t^i$ to its action $\U^i_t$, i.e., ${\U}_t^i = \pi^i_t(I_t^i)$. The collection of control laws $\pi^i := (\pi_{0}^i,\pi_1^i,\dots,)$ over the entire horizon is referred to as agent $i$'s \emph{policy}. The set of all such policies for agent $i$ is denoted by $\mathcal{P}^i$. The team's policy is denoted by $\pi = (\pi^1, \dots,\pi^N)$. We refer to such deterministic team policies as pure policies and denote the set of all pure team policies with $\mathcal{P}$.

The set of mixed policies is defined in the same manner as in \eqref{mixedpol}. Given a mixed policy $\mu$, the team randomly selects a pure policy $\pi$ with probability $\mu(\pi)$. This random selection happens using shared randomness which can be achieved by the agents in the team through a pseudo-random number generator with a common seed.

A \emph{run} ${\xi}$ of the POMDP is the sequence of  states and actions $(\bar{\mathcal{S}}_0, \U_0)(\bar{\mathcal{S}}_1,\U_1)\cdots (\bar{\mathcal{S}}_{T},\U_{T})$. The total expected reward associated with a team policy $\mu$ is given by
\begin{align}
    \mathcal{R}^{\mathscr{M}}(\mu) &= \E_\mu^\mathscr{M}\left[\sum_{t=0}^T r_t(\bar{\mathcal{S}}_t,\U_t)\right]\\
    &=\sum_{\pi \in \supp(\mu)}\left[\mu(\pi)\E_\pi^\mathscr{M}\left[\sum_{t=0}^T r_t(\bar{\mathcal{S}}_t,\U_t)\right]\right].
\end{align}

Let $\varphi$ be an $\ltlf$ specification defined using the labeling function $L$. The probability that a run of $\mathscr{M}$ satisfies $\varphi$ under policy $\mu$ is denoted by $\Py_{\varphi}^{\mathscr{M}}(\mu)$. We want to solve the following constrained optimization problem for the team of agents:

\begin{equation}\tag{MP1} \label{multiprobform}
    \begin{aligned}
   \textbf{LTL$_f$-MA:}~~~\underset{\mu}{\sup} \quad & {\mathcal R}^{\mathscr{M}}(\mu)\\ \mathrm{s.t.} \quad  & \Py_{\varphi}^{\mathscr{M}}(\mu) \geq  1-\delta.
\end{aligned}
\end{equation}

\subsection{Constrained Product Multi-Agent Problem}\label{genconsma}

We construct a constrained product multi-agent problem similar to the constrained product POMDP in Section \ref{conspomdp}. In this construction, the system state space is $\bar{\mathcal{S}}$. Let the DFA associated with the specification $\varphi$ be  $\mathscr{A} = (Q, \Sigma, q_0, \delta, F)$. Hence, the product state space is $\bar{\mathcal{S}} \times Q$. The action space is $A$. The transitions and rewards are constructed in the same manner discussed in Section \ref{conspomdp}. This  leads to the following multi-agent problem with reward-like constraints:

\begin{equation} \tag{MP2}\label{multiprodprobform}
    \begin{aligned}
   \textbf{C-MA:}~~~\underset{\mu}{\sup} \quad & {\mathcal R}^{\mathscr{M}^\times}(\mu)\\ \mathrm{s.t.} \quad  & \mathcal{R}^{f}(\mu) \geq  1-\delta.
\end{aligned}
\end{equation}

\subsection{Global and Local Specifications}
Consider that the specification $\varphi$ admits additional structure.
For each agent, $L^i:\mathcal{S}\times \loci{\mathcal{S}}{i}\to 2^{AP}$  is a labeling function  and $\varphi^i$ a local specification  defined with respect to it. There is a shared labeling function $L^g: \mathcal{S} \to 2^{AP}$ and a shared specification $\varphi^g$ defined with respect to $L^g$. The overall specification $\varphi$ is the conjunction of the local and shared specifications, i.e.,
\begin{align}
    \varphi = \varphi^g \wedge \varphi^1 \wedge \dots \wedge \varphi^N.
\end{align}

For each local specification, let the corresponding DFA be $\mathscr{A}^i = (Q^i, \Sigma^i, q^i_0, \delta^i, F^i)$ and for the shared specification, let the corresponding DFA be $\mathscr{A}^g = (Q^g, \Sigma^g, q^g_0, \delta^g, F^g)$.

Instead of using the product construction in the previous section, we can construct the product state as
\begin{align*}
    (\bar{\mathcal{S}}_t, Q^g_t, Q_t^1,\dots,Q_t^N) &= (\mathcal{S}_t,Q_t, \loci{\mathcal{S}}{1}_t,Q_t^1,\dots,\loci{\mathcal{S}}{N}_t,Q_t^N)\\
    &=: (X_t^g,X_t^1, \dots, X_t^N).
\end{align*}
The shared product state $X_t^g$ evolves as
\begin{align}
    X_{t+1}^g &= (\mathcal{S}_{t+1},Q_{t+1})\notag\\
    &= (\zeta_t^g(\mathcal{S}_t,\U_t,W^g_t),\delta^g(Q_t,L^g(\mathcal{S}_t))).
\end{align} 
The local product state $X_t^i$ evolves as
\begin{align}
    X_{t+1}^i &= (\loci{\mathcal{S}}{i}_{t+1},Q_{t+1}^i) \notag\\
    &= (\zeta_t^i(\mathcal{S}_t,\loci{\mathcal{S}}{i}_t,\U_t,W^i_t),\delta^i(Q_t^i,L^i(\mathcal{S}_t,\loci{\mathcal{S}}{i}_t))).
\end{align}
The reward is 
\begin{align}
    r^{\times}_{t}((\bar{s},q^g,q^1,\dots,q^N),u) &= {r}_t(\bar{s},u),~ \forall \bar{s},q,u\label{prodrewdefm}\\
    \label{finalrewdef}r^f((\bar{s},q^g,q^1,\dots,q^N)) &=
    \begin{cases}
    1, &\text{if }q^g \in F^g, q^i\in F^i~ \forall i\\
    0, &\text{otherwise}. 
    \end{cases} 
    \notag
\end{align}
Under this specification structure and modified construction, the \emph{product problem} also conforms to the control sharing model \cite{controlsharing}.
For this model, it has been shown that there exist optimal policies in which agent $i$ uses \emph{reduced} private information $\{\loci{\mathcal{S}}{i}_{t},Q_t^i\}$ (as opposed to $\{\loci{\mathcal{S}}{i}_{0:t},Q_{0:t}^i\}$) and the common information $C_t = \{\mathcal{S}_{0:t},Q_{0:t},\U_{0:t-1}\}$ to choose its actions. This can be formally stated as the lemma below.

\begin{lemma}[Policy Space Reduction]\label{policy_reduction}
In Problem \eqref{multiprodprobform}, we can restrict our attention to control laws of the form below without loss of optimality
    \begin{align}
        \U_t^i = \pi_t^i(\loci{\mathcal{S}}{i}_{t},Q_t^i,\mathcal{S}_{0:t},\U_{0:t-1}).
    \end{align}
\end{lemma}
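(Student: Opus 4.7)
The plan is to recognize that the product problem in \eqref{multiprodprobform} inherits the control-sharing information structure of the underlying multi-agent system, and then to invoke the structural result of \cite{controlsharing} to collapse each agent's private information down to its current local product state $X_t^i = (\loci{\mathcal{S}}{i}_t, Q_t^i)$. The reduction proceeds by first checking that the product construction preserves control sharing, then applying the common-information coordinator reformulation of \cite{nayyar2013decentralized}, and finally showing that the coordinator's optimal prescriptions only need to depend on the current local product state.

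First, I would verify that the product problem has control-sharing structure. The augmented state at time $t$ decomposes as $(X_t^g, X_t^{1:N})$ with $X_t^g = (\mathcal{S}_t, Q_t^g)$. Since $Q_t^g$ is obtained by iterating $\delta^g$ on $L^g$ applied along $\mathcal{S}_{0:t-1}$ from $q_0^g$, it is a deterministic function of the common information $\{\mathcal{S}_{0:t}, \U_{0:t-1}\}$, so the common information in the product problem coincides with $C_t$ from the original model. Each local DFA state evolves deterministically by $Q_{t+1}^i = \delta^i(Q_t^i, L^i(\mathcal{S}_t, \loci{\mathcal{S}}{i}_t))$, and the local shared-agent state by $\loci{\mathcal{S}}{i}_{t+1} = \zeta_t^i(\mathcal{S}_t, \loci{\mathcal{S}}{i}_t, \U_t, W_t^i)$ with noises $\{W_t^i\}_i$ mutually independent and independent of $W_t^g$. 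The joint transition of the local product states therefore factorizes as
\begin{equation*}
  \Py(X_{t+1}^{1:N} \mid \mathcal{S}_t, X_t^{1:N}, \U_t)
  = \prod_{i=1}^N \Py(X_{t+1}^i \mid \mathcal{S}_t, X_t^i, \U_t),
\end{equation*}
so the private processes are conditionally independent across agents given the common information and joint action. The per-stage reward $r^\times_t$ depends only on $(\bar{\mathcal{S}}_t, \U_t)$ and the terminal reward $r^f$ only on $(Q_t^g, Q_t^{1:N})$, so the product problem fits the control-sharing template.

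Next, I would apply the common-information approach. A virtual coordinator observes $C_t$ and, at each step, issues prescriptions $\gamma_t = (\gamma_t^1, \dots, \gamma_t^N)$ where $\gamma_t^i$ maps agent $i$'s private information $P_t^i = \{\loci{\mathcal{S}}{i}_{0:t}, Q_{0:t}^i\}$ to an action; agent $i$ then plays $\U_t^i = \gamma_t^i(P_t^i)$. By \cite{nayyar2013decentralized}, the resulting coordinator POMDP has the same optimal value as \eqref{multiprodprobform}. Within this POMDP, $X_t^i$ is a Markov sufficient statistic for $P_t^i$: the transition to $X_{t+1}^i$ depends on $P_t^i$ only through $X_t^i$, and both per-stage and terminal rewards depend on $P_t^i$ only through $X_t^i$. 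Combining this sufficiency with the cross-agent conditional independence above, the standard dynamic-programming argument of \cite{controlsharing} shows that restricting each prescription $\gamma_t^i$ to functions of $X_t^i$ alone is without loss of optimality. Translating back, $\U_t^i$ can be written as $\pi_t^i(\loci{\mathcal{S}}{i}_t, Q_t^i, \mathcal{S}_{0:t}, \U_{0:t-1})$, which is exactly the claimed form.

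The main obstacle is the verification step: one must be careful that augmenting the multi-agent state with DFA components does not destroy the control-sharing factorization. This is where the decomposition $\varphi = \varphi^g \wedge \bigwedge_i \varphi^i$ into a shared specification plus per-agent local specifications, each with its own labeling function $L^i$ that depends only on $(\mathcal{S}_t, \loci{\mathcal{S}}{i}_t)$, becomes essential: without it, a single DFA tracking the conjoint specification could couple the agents' private processes through a shared label, breaking the conditional independence needed for the coordinator's prescriptions to decompose agent-by-agent.
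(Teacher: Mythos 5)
Your proof is correct and takes essentially the same route as the paper: the paper's proof is a one-line appeal to \cite[Proposition 3]{controlsharing}, and your argument simply spells out the verification that the modified product construction preserves the control-sharing structure (common information containing $Q_{0:t}^g$, factorized local transitions, rewards depending on private data only through $X_t^i$) before invoking that structural result. Your closing observation about why the decomposition $\varphi = \varphi^g \wedge \bigwedge_i \varphi^i$ is essential matches the remark the paper makes immediately after the lemma.
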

\begin{proof}
    Given an arbitrary team policy $\pi$, we can construct a team policy $\bar{\pi}$ of the form above that achieves the same expected reward (including the constraint). See~\cite[Proposition 3]{controlsharing} for a complete proof.
\end{proof}

\begin{remark}
The substantial reduction in policy space under Lemma \ref{policy_reduction} is achieved by the modified construction of the product multi-agent problem. Using the construction in Section \ref{genconsma} would not have enabled us in achieving this reduction.
    A similar private information reduction can be achieved for models with transition independence \cite{kartik2022optimal}.
\end{remark}

\subsection{Algorithm}

    We can use Algorithm \ref{algjr} to solve Problem \eqref{multiprodprobform}.
    The unconstrained solver \textsc{opt} in Algorithm \ref{algjr} is now an unconstrained multi-agent solver. One way to implement such a solver is to use the common information approach \cite{nayyar2013decentralized}. Under assumptions similar to the ones made in the previous sections, owing to similar reward structures in the constrained multi-agent problem, the results derived in the previous sections can be extended to this setting in a straightforward manner.

\section{Experiments}\label{exp}

We consider a collection of gridworld problems in which an agent or a team of agents needs to maximize their reward while satisfying an $\ltlf$ specification. We consider three types of tasks, i.e., reach-avoid, ordered, and reactive tasks. In all of our experiments, we use the SARSOP \cite{kurniawati2008sarsop} solver for finding the optimal policy $\mu_k$ and a default discount factor of $0.99$. 
To estimate the constraint function, we use Monte Carlo simulations. 
We  use the online tool LTL\textsubscript{f}2DFA \cite{francesco_fuggitti_2019_3888410} based on MONA \cite{monamanual2001} to generate an equivalent DFA for an $\ltlf$ formula.

For each location $(i,j)$ in the grid, $L[(i,j)]$ is the label assigned to it by the labeling function $L$ (see Section~\ref{sec:prelim_a}). The images corresponding to the various models indicate the grid space and the associated labeling function, e.g., in Fig.~\ref{fig:model2}, we have $L[(5,2)] = \{b\}, L[(1,6)] = \{b\}, L[(7,7)] = \{a\}$ and $L[(i,j)] = \{\}$ for all other grid locations $(i,j)$. In all single agent models, the agent starts from the grid location $(0,0)$. Further, the default reward for all actions is $0$ in all grid locations, unless specified otherwise. 

In the experiments of Section~\ref{exp:1} and~\ref{exp:2}, the agent's transitions in the gridworld are stochastic, i.e., if the agent decides to move in a certain direction, it moves in that direction with probability $0.95$ and,  with probability $0.05$, it randomly moves one step  in any direction that is not opposite to its intended direction. Further, the agent's observation of its current location is   noisy -- it is equally likely to be the agent's true current location or any location neighboring its current location.

For each model discussed below, we use Algorithm \ref{algjr} to generate a mixed policy $\bar{\mu}$. The corresponding reward $\mathcal{R}^\mathscr{M}(\bar{\mu})$ and constraint $\mathcal{R}^f(\bar{\mu})$ are shown in Table \ref{tab:data}. We observe that the probability of satisfying the constraint generally exceeds the required threshold. Occasionally, the constraint is violated albeit only by a small margin. This is consistent with our result in Theorem \ref{lagrangethm}.
%Since we cannot exactly compute the optimal feasible reward $\mathcal{R}^*$, it is difficult to assess how close our policy is to optimality. Nonetheless, 
We also observe that the agent behaves in a manner that achieves high reward in all of these models. 

\subsection{Single-Agent Location Uncertainty with Reach-Avoid Task}\label{exp:1}

In this task, we are interested in reaching a goal state $a$ and always avoiding unsafe states $b$. This can be specified using $\ltlf$ as $ \varphi_1 = \textbf{F}a\wedge(\textbf{G}\neg b)$. This task was performed on model $\mathscr{M}_1$ with discounted reward $r((1,6),u) = 3, r((4,3),u) = 3$, and $r((7,7),u) = 1$ for all actions $u$. 

In this experiment, we observe two characteristic behaviors. The agent reaches the goal state $a$ and remains there. This behavior ensures that the specification is met but the reward is relatively lower. Following the other behavior, the agent goes towards the location $(4,3)$ and tries to remain there to obtain higher reward. However, since the obstacle is very close and the transitions are stochastic, it is prone to violating the constraint. Nonetheless, this violation is rare enough and the overall satisfaction probability exceeds the desired threshold.

\begin{figure}[h]
    \centering
          \begin{subfigure}[b]{0.17\textwidth}
         \centering
         \includegraphics[width=\textwidth]{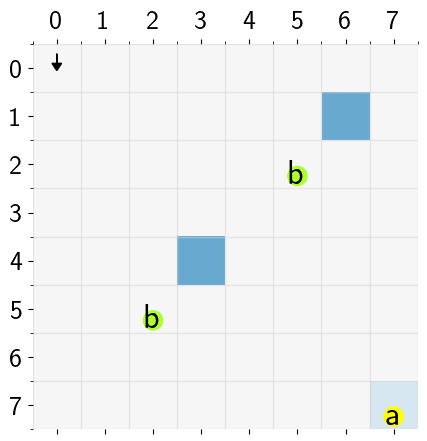}
         \caption{Model $\mathscr{M}_1$.}
         \label{fig:model2}
     \end{subfigure}
     \hfill
         \begin{subfigure}[b]{0.17\textwidth}
         \centering
         \includegraphics[width=\textwidth]{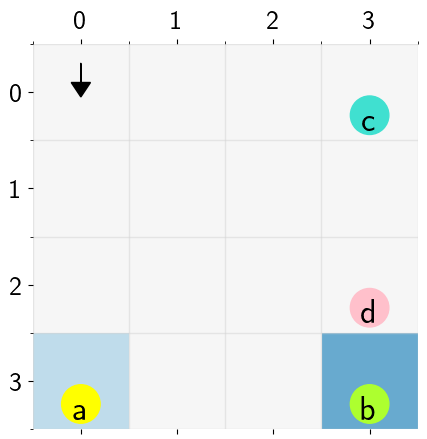}
         \caption{Model $\mathscr{M}_2$.}
         \label{fig:model6}
     \end{subfigure}
\end{figure}

\subsection{Single-Agent Location Uncertainty with Reactive Task}\label{exp:2}

In this task, there are four states of interest: $a,b,c$, and $d$. The agent must eventually reach $a$ or $b$. However, if it reaches $b$, then it must visit $c$ without visiting $d$. This can be expressed as $\varphi_2= \textbf{F}(a \vee b) \wedge \textbf{G}(b \to (\neg {d}  \textbf{U}c))$. This task was performed on 
model $\mathscr{M}_2$ with discounted reward $r((3,0),u) = 1$ and $r((3,3),u) = 2$ for all actions $u$.

In this experiment, under the policy returned by Algorithm~1 for $B=10$ and $1-\delta = 0.8$,  the agent goes to $a$ and remains there, thus satisfying the constraint. Occasionally, the agent also goes to state $b$ and remains there to obtain a larger reward. This behavior violates the constraint, since if the agent ever visits $b$, it must eventually go to $c$. We observe that this occurrence is rare enough and the overall satisfaction probability is close to the desired threshold. 

We extend our experiment by varying $B$ and $\delta$, presenting the results in Table \ref{tab:sensitivity}. All the final policies either exceed or narrowly miss the desired satisfaction probability threshold $1-\delta$.
Notably, as $B$ increases while keeping $\delta$ constant, the satisfaction probability tends to rise with minimal impact on the total objective value. This observations aligns with the bounds presented in Theorem \ref{lagrangeapproxthm}.

In the experiments of Section~\ref{exp:3}, \ref{exp:4}, and \ref{exp:5}, the agents' transitions in the gridworld are deterministic. 

\begin{figure}[h]
     \centering
     \begin{subfigure}[b]{0.17\textwidth}
         \centering
         \includegraphics[width=\textwidth]{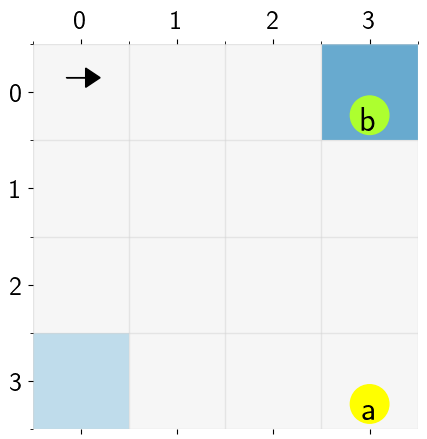}
         \caption{Obstacle at $(0,3)$.}
         \label{fig:model8a}
     \end{subfigure}
     \hfill
     \begin{subfigure}[b]{0.17\textwidth}
         \centering
         \includegraphics[width=\textwidth]{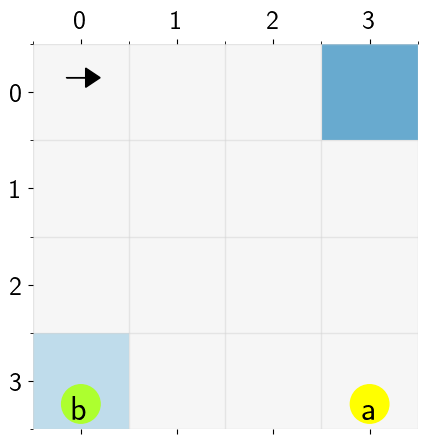}
         \caption{Obstacle at $(3,0)$.}
         \label{fig:model8b}
     \end{subfigure}
     \caption{Model $\mathscr{M}_3$ Reach-Avoid Task.}
    \label{fig:model4}
\end{figure}

\begin{table}
    \centering
    \caption{Results for different $B$ and $\delta$ for the setting of subsection VII-B}\label{tab:sensitivity}
    \begin{tabular}{lccccccccccccc}
      \toprule % from booktabs package
      & $\mathcal{R}^{\mathscr{M}}(\bar{\mu})$ & $\mathcal{R}^f(\bar{\mu})$ & $1-\delta$ & $B$ & $\eta$ & $K$  \\
      \midrule % from booktabs package
     &$1.33$ & $0.49$ & $0.5$ & $5$ & $2$ & $100$ \\ 
       & $1.31$ & $0.51$ & $0.5$ & $50$ & $2$ & $100$ \\ 
       & $1.31$ & $0.52$ & $0.5$ & $500$ & $2$ & $100$  \\ 
       & $1.14$ & $0.69$ & $0.7$ & $5$ & $2$ & $100$ \\ 
       & $1.12$ & $0.71$ & $0.7$ & $50$ & $2$ & $100$ \\ 
       & $1.11$ & $0.72$ & $0.7$ & $500$ & $2$ & $100$  \\
       & $0.95$ & $0.89$ & $0.9$ & $5$ & $2$ & $100$ \\
       & $0.94$ & $0.90$ & $0.9$ & $50$ & $2$ & $100$\\
        & $0.92$ & $0.92$ & $0.9$ & $500$ & $2$ & $100$ \\
      \bottomrule % from booktabs package
    \end{tabular}
\end{table}

\subsection{Single-Agent Predicate Uncertainty with Reach-Avoid Task}\label{exp:3}

In this task, the reach-avoid specification  is the same as $\varphi_1$ in Section \ref{exp:1} and was performed on model $\mathscr{M}_3$ with discounted reward $r((3,0),u) = 2$ and $r((0,3),u) = 4$ for all actions $u$. However, the agent does not know which location to avoid, as there is uncertainty in the location of the object that the agent has to avoid. There are two possible locations for object $b$: $(3,0)$ and $(0,3)$. In both cases, whenever the agent is far away (Manhattan distance greater than 1) from the object $b$, it gets an observation `F' indicating that it is \emph{far} with probability $1$. When the object is at the bottom left and the agent is adjacent to it, the agent gets an observation `C' with probability $0.9$ indicating that the object is \emph{close}. However, if object $b$ is at the top right and the agent is adjacent to it, the agent gets an observation `C' only with probability $0.1$. Therefore, the detection capability of the agent is stronger when the object is in the bottom-left location as opposed to when it is in the top-right location.

In this experiment, the agent receives high reward when it remains in the top-right corner compared to a moderate reward in the bottom-left corner. Further, the agent's detection capability is better when it is in the bottom-left region than when it is in the top-right region. Thus, it generally first heads towards the location $a$ (since it has to eventually visit it) via the bottom-left region without hitting the corner and acquires information on where the object is located. After reaching $a$, it goes to the top-right corner if the obstacle is \emph{not} located there, and bottom-left corner otherwise. We see rare instances where the agent completely ignores the constraint and just maximizes the reward.

\begin{figure}
    \centering
    \includegraphics[scale=0.42]{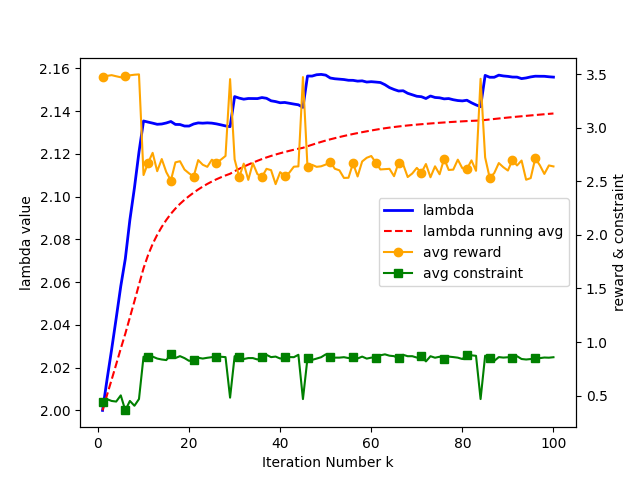}
    \caption{This plot illustrates how the Lagrange multiplier $\lambda_k$, the reward $\mathcal{R}^\mathscr{M}(\mu_k)$, and the probability of satisfaction $\Py_{\varphi}^{\mathscr{M}}(\mu_k)$ evolve with $k$ for the experiment in Section~\ref{exp:3}.}
    \label{fig:my_label}
\end{figure}

Figure \ref{fig:my_label}  illustrates the performance of the various policies $\mu_k$. The Lagrange multiplier $\lambda_k$ shows a decreasing trend as long as the constraint continues to be met (which happens for the majority of the iterations). However, the multiplier eventually diminishes to a point where the constraint is breached and 
% This is when 
we observe noticeable increase in reward. These spikes contribute to the overall average reward. Given the significant  
% nature of the 
constraint violation, the Lagrange multiplier experiences an uptick. This cycle ensures that constraint violation occurs rarely. Since we pick a policy randomly and with uniform distribution, the average error probability is still close to the threshold (see Table \ref{tab:data}).

\subsection{Goal-POMDP Specification Uncertainty with Random Ordered Task}\label{exp:4}

In this task, the agent needs to visit state $a$ and $b$ strictly in that order or \emph{vice versa}. Thus, the uncertainty lies in the specification which the agent is required to satisfy. Either specification is chosen with uniform probability at the start (indicated by the truth value of $c$) and kept fixed. 
The information regarding the truth value of $c$ is unknown to the agent until it is revealed by an observation from a particular gridworld state.  This setting can be specified using $\ltlf$ as $ \varphi_4 = (c \to
(\neg b \textbf{U} (a \wedge \textbf{F}b))) \wedge (\neg c \to (\neg a \textbf{U} (b \wedge \textbf{F}a)))$.

\begin{figure}[h]
     \centering
     \begin{subfigure}[b]{0.17\textwidth}
         \centering
         \includegraphics[width=\textwidth]{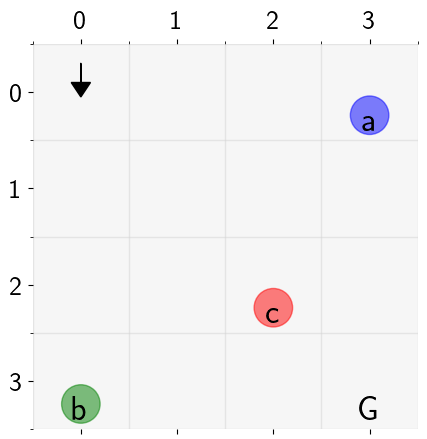}
     \end{subfigure}
     \caption{Model $\mathscr{M}_4$ Random Ordered Task with Goal.}
         \label{fig:model13}
\end{figure}
This task was performed on model $\mathscr{M}_4$ with the observation corresponding to grid location $(2,2)$ providing the truth value of $c$. In addition, the agent seeks to further reach the goal state, i.e., grid location $(3,3)$. The agent receives a reward of $-1$ for all state-action pairs before reaching a goal state and $0$ reward in the goal state. In the experiment, the agent first visits $c$, then visits $a$ and $b$ in the order corresponding to the truth value of $c$, and finally remains in the goal state.

\subsection{Multi-Agent System Collision Avoidance with Random Ordered Task and One-Way Lane}\label{exp:5}

In this task, there are two agents $A_1$ and $A_2$, whose locations are known to each other. Agent $A_1$ has one goal location $a$ while agent $A_2$ has two goal locations $b$ and $c$. An agent receives a positive reward only when it is at its goal locations. We require agent $A_2$ to visit both its goal locations, but in a specific order (i.e., $b$ to $c$ or $c$ to $b$). The goal locations are known to both agents but the desired visitation order is known only to agent $A_2$. This order is randomly chosen at the beginning and kept fixed (indicated by the truth value of $o$). We require the agents to avoid collisions between themselves (indicated by the truth value of $col$) and to always stay within the lanes (indicated by the truth value of $s$). This requirements can be specified using $\ltlf$ as $\varphi_5 = (o \to
(\neg b \textbf{U} (c \wedge \textbf{F}b))) \wedge (\neg o \to (\neg c \textbf{U} (b \wedge \textbf{F}c))) \wedge (\textbf{G}(s \wedge \neg col))$.  

\begin{figure}[h]
     \centering
     \begin{subfigure}[b]{0.45\textwidth}
         \centering
         \includegraphics[width=\textwidth]{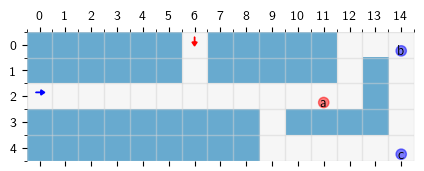}
     \end{subfigure}
        \caption{Multi-agent system collision avoidance benchmark with random ordered tasks, one-way lane, and model $\mathscr{M}_5$.}
        \label{fig:amodel17}
\end{figure}

This task was performed on model $\mathscr{M}_5$ with a larger grid size of $5 \times 15$ and a complex lane structure. The lanes are marked by the grey grid cells in Fig.~\ref{fig:amodel17}. Further, we allow only one-way movement in some lanes. In model  $\mathscr{M}_5$, agent $A_2$ (blue arrow), starting from $(2,0)$, has two goal locations $b$ (0, 14) and $c$ (4, 14), while agent $A_1$ (red arrow), starting from $(0,6)$, has one goal location $a$ (2, 11). Each agent receives a (discounted) reward of $1$  for each time step in their own goal locations. The environment allows only one-way movement in the lane from $(4, 14)$ to $(0, 14)$. Thus, if $o$ indicates that agent $A_2$ has to visit $c$ before $b$, it can do so through the one-way lane between $c$ and $b$. If $o$ indicates the other order of visitation, then agent $A_2$ has to take the longer route from $b$ to $c$ which passes through agent $A_1$'s goal location, i.e., (2, 11). Our result matches this expectation, with the agents moving in a manner so as to avoid collision and maximize the total reward.

\begin{table}
    \centering
    \caption{Reward and constraint performance of the policy $\bar{\mu}$ under various models and specifications.}\label{tab:data}
    \begin{tabular}{lccccc}
      \toprule 
      \bfseries Model & \bfseries Spec & $\mathcal{R}^{\mathscr{M}}(\bar{\mu})$ & $\mathcal{R}^f(\bar{\mu})$ & $1-\delta$ & $B$ \\
      \midrule 
      $\mathscr{M}_1$ & $\varphi_1$& $0.95$ & $0.70$ & $0.70$ & $8$\\ 
      $\mathscr{M}_2$  &$\varphi_2$ & $1.01$ & $0.79$ & $0.80$ & $10$\\ 
      $\mathscr{M}_3$  &$\varphi_1$ & $2.73$ & $0.81$ & $0.85$ & $20$\\
      $\mathscr{M}_4$  &$\varphi_4$ & $-14$ & $0.87$ & $0.9$ & $100$\\
      $\mathscr{M}_5$  &$\varphi_5$ & $1.59$ & $0.72$ & $0.7$ & $50$\\
      \bottomrule 
    \end{tabular}
\end{table}

\section{Conclusions}

We provided a methodology for designing agent policies that maximize the total expected reward while ensuring that the probability of satisfying a linear temporal logic ($\ltlf$) specification is sufficiently high. By augmenting the system state with the state of the DFA associated with the $\ltlf$ specification, we constructed a constrained product POMDP. Solving this constrained product POMDP is equivalent to solving the original problem. We provided an alternative constrained POMDP solver based on the exponentiated gradient (EG) algorithm and derived approximation bounds for it. 
Our methodology is further extended to a multi-agent setting with information asymmetry. For these various settings, we computed near optimal policies that satisfy the $\ltlf$ specification with sufficiently high probability.  We observed in our experiments that our approach results in policies that effectively balance information acquisition (exploration), reward maximization (exploitation) and satisfaction of the specification which is very difficult to achieve using classical POMDPs.

\bibliographystyle{IEEEtran}
\bibliography{references}

\appendices

\section{Proof of Theorem \ref{equiv1}}\label{equiv1proof}
For any policy $\mu$, we have
\allowdisplaybreaks\begin{align*}
    \mathcal{R}^{\mathscr{M}^\times}_{o}(\mu) &= \E_\mu\left[\sum_{t=0}^T r_t^{o\times}(X_t,\U_t)\right]\\
    &= \E_\mu\left[\sum_{t=0}^T r_t^{o\times}((\mathcal{S}_t,Q_t),\U_t)\right]\\
    & \stackrel{a}{=} \E_\mu\left[\sum_{t=0}^T r_t^{o}(\mathcal{S}_t,\U_t)\right] = \mathcal{R}^{\mathscr{M}}_{o}(\mu).
\end{align*}
The equality in $(a)$ follows from the definition of $r_t^{o\times}$ in \eqref{prodrewdefo}. We can similarly show that $\mathcal{R}^{\mathscr{M}^\times}_{c}(\mu) =  \mathcal{R}^{\mathscr{M}}_{c}(\mu)$. Further, using \eqref{finalrewdef}, we have 
\begin{align*}
    r^f(X_{T+1}) &= r^f((\mathcal{S}_{T+1},Q_{T+1}))  = \mathds{1}_F(Q_{T+1}).
\end{align*}

By  construction of the product POMDP dynamics, a run $X_{0:T}$ of the product POMDP satisfies $\varphi$ if and only $Q_{T+1} \in F$. 
Hence,
\begin{align*}
    \mathcal{R}^{f}(\mu) = \E_\mu\left[ r^f(X_{T+1})\right] = \Py_{\varphi}^{\mathscr{M}}(\mu).
\end{align*}

\section{Proof of Lemma \ref{epsopt}}\label{epsoptproof}
By definition of the Lagrangian function for Problem \eqref{prodprobform}, we have:
\allowdisplaybreaks\begin{align}
    \mathcal{R}^* &= l^* \notag \\
    &\leq l^*_B \notag \\
    &\leq \inf_{\boldsymbol{\lambda} \geq 0, \lVert\boldsymbol{\lambda} \rVert_{1}\leq B}L(\bar{\mu},\boldsymbol{\lambda}) + \epsilon \notag \\
    &= \epsilon + \mathcal{R}^{\mathscr{M}^\times}_{o}(\bar{\mu}) + \label{eq:appB_0} \\ 
    &\inf_{\boldsymbol{\lambda}\geq 0,\lVert\boldsymbol{\lambda \rVert}_{1}\leq B}\left[\lambda^f (\mathcal{R}^{f}(\bar{\mu})-1+\delta) + 
    \lambda^c(\mathcal{R}^{\mathscr{M}^\times}_{c}(\bar{\mu}) - \rho) \right]. \notag
\end{align}
There are two possible cases: \\
(i) $\min\{\mathcal{R}^{f}(\bar{\mu})-1+\delta, \mathcal{R}^{\mathscr{M}^\times}_{c}(\bar{\mu}) - \rho\} \geq 0$ and \\(ii) 
$\min\{\mathcal{R}^{f}(\bar{\mu})-1+\delta, \mathcal{R}^{\mathscr{M}^\times}_{c}(\bar{\mu}) - \rho\} < 0$.

If case (i) is true, then  \eqref{conrewsat} and \eqref{consat} are trivially satisfied. Further, in this case,
\begin{align*}
    \inf_{\boldsymbol{\lambda}\geq 0,\lVert\boldsymbol{\lambda \rVert}_{1}\leq B}\left[\lambda^f (\mathcal{R}^{f}(\bar{\mu})-1+\delta) + 
    \lambda^c(\mathcal{R}^{\mathscr{M}^\times}_{c}(\bar{\mu}) - \rho) \right] = 0.
\end{align*}
Therefore, $\mathcal{R}^* \leq \mathcal{R}^{\mathscr{M}^\times}_{o}(\bar{\mu}) + \epsilon$ holds, hence \eqref{objrewsat} is satisfied.

If case (ii) is true, we have
\begin{align}
   &\inf_{\boldsymbol{\lambda}\geq 0,\lVert\boldsymbol{\lambda \rVert}_{1}\leq B}\left[\lambda^f (\mathcal{R}^{f}(\bar{\mu})-1+\delta) + \lambda^c(\mathcal{R}^{\mathscr{M}^\times}_{c}(\bar{\mu}) - \rho) \right] \notag \\
   &= B\min\{\mathcal{R}^{f}(\bar{\mu})-1+\delta, \mathcal{R}^{\mathscr{M}^\times}_{c}(\bar{\mu}) - \rho\} < 0 \label{eq:appB_1} 
\end{align}
The results in \eqref{eq:appB_0} and \eqref{eq:appB_1} together imply that $\mathcal{R}^* \leq \mathcal{R}^{\mathscr{M}^\times}_{o}(\bar{\mu}) + \epsilon$ and hence, \eqref{objrewsat} is satisfied.
Further, by substituting \eqref{eq:appB_1} in \eqref{eq:appB_0} we get
\begin{align*}
    B\min\{\mathcal{R}^{f}(\bar{\mu})-1+\delta, \mathcal{R}^{\mathscr{M}^\times}_{c}(\bar{\mu}) - \rho\} &\geq \mathcal{R}^*- \mathcal{R}^{\mathscr{M}^\times}_{o}(\bar{\mu})-\epsilon\\
    &\geq \mathcal{R}^*- \mathbf{R}^{max}_{o}-\epsilon,
\end{align*}
where the last inequality holds because $\mathbf{R}^{max}_{o}$ is the maximum possible reward. Hence, \eqref{conrewsat} and \eqref{consat} are satisfied.

\section{Proof of Theorem \ref{lagrangethm}}\label{lagrangethmproof}
Consider the dual of \eqref{bsupinf} and define
\begin{align}
    u^*_B:=\inf_{\boldsymbol{\lambda}\geq 0,\lVert \boldsymbol{\lambda} \rVert_{1}\leq B}\sup_{\mu} L(\mu,\boldsymbol{\lambda})\tag{P7}\label{binfsup}.
\end{align}
Then, we obtain $l^*_B \stackrel{a}{\leq} u^*_B$ and 
\allowdisplaybreaks\begin{align}
     u^*_B &= \inf_{\boldsymbol{\lambda}\geq 0,\lVert \boldsymbol{\lambda} \rVert_{1}\leq B}\sup_{\mu} L(\mu,\boldsymbol{\lambda}) \notag\\
    &\stackrel{}{\leq}  \sup_{\mu}L(\mu,\bar{\boldsymbol{\lambda}})\notag\\
    &\stackrel{}{=}  L(\mu_{\bar{\boldsymbol{\lambda}}},\bar{\boldsymbol{\lambda}})\notag\\
    &\stackrel{b}{=} \frac{1}{K}\sum_{k=1}^KL(\mu_{\bar{\boldsymbol{\lambda}}},{\boldsymbol{\lambda}}_k)\notag\\
    &\stackrel{c}{\leq} \frac{1}{K}\sum_{k=1}^KL(\mu_k,{\boldsymbol{\lambda}}_k)\notag\\
    &\stackrel{d}{\leq} \frac{1}{K}\inf_{\boldsymbol{\lambda}\geq 0,\lVert \boldsymbol{\lambda} \rVert_{1}\leq B}\sum_{k=1}^KL(\mu_k,{\boldsymbol{\lambda}})+ 2BG\sqrt{2\log3/K}\notag\\
    &\stackrel{e}{=} \inf_{\boldsymbol{\lambda}\geq 0,\lVert \boldsymbol{\lambda} \rVert_{1}\leq B}L(\bar{\mu},{\boldsymbol{\lambda}})+ 2BG\sqrt{2\log3/K}. \label{eq:appC_1}
\end{align}
The inequality in $(a)$ holds because of weak duality \cite{boyd2004convex}. The equality in $(b)$ holds because $L(\cdot, \cdot)$ is a  bilinear (more precisely, bi-affine) function. The inequality in $(c)$ holds because $\mu_k$ is the maximizer associated with $\boldsymbol{\lambda}_k$. Inequality $(d)$ follows from the no-regret property of the EG algorithm \cite[Corollary 5.7]{hazan2016introduction}. The equality in $(e)$ is again a consequence of the bilinearity of $L(\cdot,\cdot)$. Combining \eqref{eq:appC_1} with Lemma \ref{epsopt} proves the theorem.

\section{Proof of Theorem \ref{lagrangeapproxthm}}\label{lagrangeapproxthmproof}

Define $\hat{L}(\mu,\boldsymbol{\lambda})=\mathcal{R}^{\mathscr{M}^\times}(\mu) + \lambda^{f} (\textsc{eval}(\mu,r^{f}) -1+\delta) + \lambda^{c} (\textsc{eval}(\mu,r^{c\times}) -\rho)$.
Following initial arguments similar to those in the proof of Theorem \ref{lagrangethm}, we obtain:
\allowdisplaybreaks\begin{align}
    l^*_B &\stackrel{}{\leq} \frac{1}{K}\sum_{k=1}^KL(\mu_{\bar{\boldsymbol{\lambda}}},{\boldsymbol{\lambda}}_k) \notag\\
    &\stackrel{a}{\leq} \frac{1}{K}\sum_{k=1}^KL(\mu_{\boldsymbol{\lambda}_k},{\boldsymbol{\lambda}}_k)\notag\\
    &\stackrel{b}{\leq} \frac{1}{K}\sum_{k=1}^KL(\mu_k,{\boldsymbol{\lambda}}_k) + \epsilon_{bp} \notag\\
    &\stackrel{c}{\leq} \frac{1}{K}\sum_{k=1}^K\hat{L}(\mu_k,{\boldsymbol{\lambda}}_k) + \epsilon_{bp} + \epsilon_{est} \notag \\
    &\stackrel{d}{\leq} \frac{1}{K}\inf_{\boldsymbol{\lambda}\geq 0,\lVert \boldsymbol{\lambda} \rVert_{1}\leq B}\sum_{k=1}^K\hat{L}(\mu_k,{\boldsymbol{\lambda}})+ 2BG\sqrt{\frac{2\log3}{K}} + \epsilon_{bp} + \epsilon_{est} \notag\\
    &\stackrel{e}{=} \inf_{\boldsymbol{\lambda}\geq 0,\lVert \boldsymbol{\lambda} \rVert_{1}\leq B}\hat{L}(\bar{\mu},{\boldsymbol{\lambda}})+ 2BG\sqrt{2\log3/K} + \epsilon_{bp} + \epsilon_{est} \notag\\
    &\stackrel{f}{\leq} \inf_{\boldsymbol{\lambda}\geq 0,\lVert \boldsymbol{\lambda} \rVert_{1}\leq B}L(\bar{\mu},{\boldsymbol{\lambda}})+ 2BG\sqrt{2\log3/K} + \epsilon_{bp} + 2\epsilon_{est}. \label{eq:appC_2}
\end{align}
The inequality in $(a)$ holds because $\mu_{\boldsymbol{\lambda}_k}$ is the maximizer associated with $\boldsymbol{\lambda}_k$. The inequality in $(b)$ holds 
by the bounded sub-optimality of $\mu_k$.
Inequality $(c)$ follows from the bounded error of the estimator. Inequality $(d)$ follows from  the no-regret property of the EG algorithm. The equality in $(e)$ is again a consequence of the bilinearity of $\hat{L}(\cdot)$. Finally, inequality $(f)$ follows from the bounded error of the estimator. Combining \eqref{eq:appC_2} with Lemma \ref{epsopt} proves the theorem.
 
\section{Proof of Lemma \ref{disc}}\label{discproof}

\begin{table*}[!htbp]
    \centering
    \caption{Performance Value and Hyper-parameters}\label{tab:add_data}
    \begin{tabular}{lccccccccccccc}
      \toprule % from booktabs package
      \bfseries Model & \bfseries Spec & $|S|$ & $|Q|$ & $\mathcal{R}^{\mathscr{M}}(\bar{\mu})$ & $\mathcal{R}^f(\bar{\mu})$ & $1-\delta$ & $B$ & $\eta$ & $K$ & $simu$ & $T_{solve}$ & $T_{simu}$ & $T_{total}$ \\
      \midrule % from booktabs package
      $\mathscr{M}_1$ & $\varphi_1$&64 &3 &$0.95$ & $0.70$ & $0.70$ & $8$ & $2$ & $50$ & $100$ & $17299$ & $7825$ & $25125$\\ %1.2
      $\mathscr{M}_2$  &$\varphi_2$ &16 &4 & $1.01$ & $0.79$ & $0.80$ & $10$ & $2$ & $100$ & $200$ & $109$ & $718$ & $828$\\ %5new
      $\mathscr{M}_3$  &$\varphi_1$ &32 &3 & $2.73$ & $0.81$ & $0.85$ & $20$ & $0.02$ & $100$ & $200$ & $370$ & $21676$ & $22046$\\ %Fa1
      $\mathscr{M}_4$  &$\varphi_4$ &16 &3 & $-14$ & $0.87$ & $0.9$ & $100$ & $2$ & $20$ & $100$ & $79$ & $14$ & $93$\\ %sequence1 (previous)
      $\mathscr{M}_5$  &$\varphi_5$ &116 &5 & $1.59$ & $0.72$ & $0.7$ & $50$ & $2$ & $50$ & $50$ & $11192$ & $41705$ & $53108$\\ %sequence1 (previous)
      \bottomrule % from booktabs package
    \end{tabular}
\end{table*}

The rewards $\mathcal{R}^{\mathscr{M}^\times}_{o}(\mu)$ and $\mathcal{R}^{f}(\mu)$ in the corresponding product POMDP are given by
\begin{align*}
    \mathcal{R}^{\mathscr{M}^\times}_{o}(\mu) &= \E_\mu\left[\sum_{t=0}^T r_t^{o\times}(X_t,\U_t)\right]\\
    &=(1-\gamma)\E_\mu\left[\sum_{k=0}^{\infty} \gamma^k \sum_{t=0}^k r_t^{o\times}(X_t,\U_t)\right]\\
    &=(1-\gamma)\E_\mu\left[\sum_{t=0}^{\infty}  \sum_{k=0}^{\infty}\gamma^{k+t} r_t^{o\times}(X_t,\U_t)\right]\\
    &= \E_\mu\left[\sum_{t=0}^\infty \gamma^{t}r_t^{o\times}(X_t,\U_t)\right]\\
    \mathcal{R}^{f}(\mu) &= \E_\mu\left[ r^f(X_{T+1})\right]\\
    &= (1-\gamma)\E_\mu\left[\sum_{t=0}^\infty \gamma^{t}r^f(X_{t+1})\right]\\
    &= \frac{(1-\gamma)}{\gamma}\E_\mu\left[\sum_{t=1}^\infty \gamma^{t}r^f(X_{t})\right].
\end{align*}
Similarly, we have $\mathcal{R}^{\mathscr{M}^\times}_{c}(\mu) = \E_\mu\left[\sum_{t=0}^\infty \gamma^{t}r_t^{c\times}(X_t,\U_t)\right]$.
Therefore,
\small{
\begin{align*}
&L(\mu,\boldsymbol{\lambda}) =  -\frac{\lambda^{f}(1-\gamma)}{\gamma}\E[r^f(X_{0})]-\lambda^{f}(1-\delta) - \lambda^{c}\rho  + \\
    &\E_\mu\left[ \sum_{t=0}^\infty \gamma^{t}\left(r_t^{o\times}(X_t,\U_t)+ \frac{\lambda^{f}(1-\gamma)}{\gamma}r^f(X_{t}) + \lambda^{c}r_t^{c\times}(X_t,\U_t)\right) \right].
\end{align*}}

\section{Hyper-parameters and Runtimes}

The parameter $\delta$ in all the experiments is chosen in the following manner: (i) We first solve a POMDP problem aiming to maximize the probability of satisfaction of the $\ltlf$ constraint. Let this probability be denoted by $p_{max}$. (ii) Since any threshold $1-\delta$ larger than $p_{max}$ is infeasible, we choose $\delta$ such that $1-\delta$ is around $0.9p_{max}$. $\eta$ and $B$ are hyperparameters in our experiments. The value of $\eta$ suggested by Theorem \ref{lagrangethm} is guaranteed to result in convergence, but in practice, slightly larger values of $\eta$ can lead to faster convergence.

In Table~\ref{tab:add_data}, we provide additional hyper-parameters that were used in our experiments. The parameter $simu$ denotes the number of Monte-Carlo simulations that were used to estimate the constraint in each iteration. $T_{solve}$ is the total time (over $K$ iterations) spent in solving the unconstrained POMDP using the SARSOP solver \cite{kurniawati2008sarsop}. $T_{simu}$ is the total time spent in simulating policies generated by the SARSOP solver. $T_{total}$ is the overall computation time for that model. 

The runtime (see Table~\ref{tab:add_data}) for our models is drastically different due to differences in three factors: (i) the state size, (ii) the DFA size, and (iii) the complexity of the POMDP problem.

\end{document}